\renewcommand\thesection{\Roman{section}}
\titleformat{\section}[block]{\large\scshape\centering}{\thesection.}{1em}{} 
\newcommand{\sx}{\mathsf{x}}
\newcommand{\rx}{\mathtt{x}}
\newcommand{\vx}{\mathbf{x}}
\newcommand{\sign}{\mathsf{sign}}
\newcommand{\nmathbf}{\bm}
\def\bfbeta   {\nmathbf \beta}
\def\boldfacefake#1{\kern-4pt
    \hbox{ \mathsurround=0pt
    \hbox to 0.4pt{$#1$\hss}\hbox to 0.4pt{$#1$\hss}\hbox {$#1$}}}
\newcommand{\be}{\begin{eqnarray}}
\newcommand{\ee}{\end{eqnarray}}
\newcommand{\ba}{\begin{eqnarray*}}
\newcommand{\ea}{\end{eqnarray*}}
\newcommand{\reals}{\mbox{\rm I\kern-.20em R}}
\newcommand{\sreals}{\mbox{\small \rm I\kern-.20em R}}
\newtheorem{theorem0}{Theorem}
\newtheorem{lemma0}{Lemma}
\newtheorem{remark0}{Remark}
\newtheorem{fact0}{Fact}
\newtheorem{example0}{Example}
\newtheorem{definition0}{Definition}
\newtheorem{corollary0}{Corollary}
\newtheorem{proposition0}{Proposition}
\newtheorem{algorithmY}{Algorithm}
\newtheorem{conjecture0}{Conjecture}
\newenvironment{theorem}{\begin{theorem0} \mbox{} }{\end{theorem0}}
\newenvironment{lemma}{\begin{lemma0} \mbox{}}{\end{lemma0}}
\newenvironment{fact}{\begin{fact0} \mbox{}}{\end{fact0}}
\newenvironment{definition}{\begin{definition0}
\mbox{}}{\end{definition0}}
\title{\vspace{-15mm}\fontsize{24pt}{10pt}\selectfont\textbf{On the Predictive Properties of \\ Binary Link Functions}} 
\author{
\large
\textsc{Necla G\"und\"uz}\\
\normalsize Department of Statistics \\ 
\normalsize Gazi \"Universitesi, Fen Fak\"ultesi, Istatistik B\"ol\"um\"u \\
\normalsize 06500 Teknikokullar, Ankara, Turkey\\
\normalsize \href{mailto:ngunduz@gazi.edu.tr}{ngunduz@gazi.edu.tr} 
\vspace{5mm}
\and
\textsc{Ernest Fokou\'e}\thanks{Corresponding Author}\\[2mm] 
\normalsize Center for Quality and Applied Statistics \\ 
\normalsize Rochester Institute of Technology \\ 
\normalsize 98 Lomb Memorial Drive, Rochester, NY 14623, USA \\ 
\normalsize \href{mailto:ernest.fokoue@rit.edu}{ernest.fokoue@rit.edu} 
\vspace{-5mm}
}
\date{}
\begin{document}

\maketitle 

\thispagestyle{fancy} 


\begin{abstract}

\noindent This paper provides a theoretical and computational justification of the long held
claim that of the similarity of the probit and logit link functions often used in binary classification.
Despite this widespread recognition of the strong similarities between these two link functions, very few (if any) researchers have dedicated
time to carry out a formal study aimed at establishing and characterizing firmly all the aspects of
the similarities and differences. This paper proposes a definition of both structural and predictive
equivalence of link functions-based binary regression models, and explores the various ways in which they
are either similar or dissimilar.
From a predictive analytics perspective, it turns out that not only are probit and logit perfectly predictively concordant,
but the other link functions like cauchit and complementary log log enjoy very high percentage of predictive equivalence.
Throughout this paper, simulated and real life examples demonstrate  all the equivalence results that we
prove theoretically.
\end{abstract}



\section{Introduction}
Given $(\vx_1,y_1),\cdots,(\vx_n,y_n),$ where $\vx_i^\top \equiv (\sx_{i1}, \cdots,\sx_{ip})$
denotes the $p$-dimensional vector of characteristics and $y_i\in\{0,1\}$ denotes the binary response variable,
binary regression seeks to model the relationship between $\vx$ and $y$ using
\begin{eqnarray}
\pi(\vx_i)=\Pr[Y_i=1|\vx_i]=F(\eta(\vx_i))
\label{eq:binreg:1}
\end{eqnarray}
where
\begin{eqnarray}
 \eta(\vx_i)= \beta_0+\beta_1 \sx_{i1} + \cdots + \beta_p \sx_{ip} =\tilde{\vx}_i^\top\bfbeta \quad i=1,\cdots,n
\label{eq:eta:1}
\end{eqnarray}
for a $(p+1)$-dimensional vector $\bfbeta = (\beta_0, \beta_1,\cdots,\beta_p)^\top$ of regression coefficients and $F(\cdot)$ is the cdf corresponding to the link functions under consideration. Specifically, the cdf $F(\cdot)$ is the inverse of the link function $g(\cdot)$, such that $\eta(\vx_i)=F^{-1}(\pi(\vx_i))=g(\pi(\vx_i))=g(\mathbb{E}(Y_i|\vx_i))$. Table \eqref{tab:links:1} provides specific definitions of the link functions considered in this paper, along with their corresponding cdfs.
\begin{table}[!h]
  \centering
  \begin{tabular}{lcc}
  \toprule
  {\sf Model}  & {\sf Link function} & {\sf cdf} \\ \hline
  \vspace{0.25cm}
  {\sf Probit}  & $\Phi^{-1}(v)$ & $\Phi(u)$ \\
  \vspace{0.25cm}
  {\sf  Compit}  & $\log[-\log(1-v)]$ & $1-e^{-e^u}$ \\
  \vspace{0.25cm}
  {\sf  Cauchit} & $\tan\Big[\pi v-\frac{\pi}{2}\Big]$ & $\frac{1}{\pi}\Big[\tan^{-1}(u)+\frac{\pi}{2}\Big]$ \\
  {\sf  Logit}   & $\log\Big[\frac{v}{1-v}\Big]$ & $\Lambda(u)=\frac{1}{1+e^{-u}}$ \\ \bottomrule
  \end{tabular}
  \caption{\em Link functions along with corresponding cdfs}
  \label{tab:links:1}
\end{table}

\noindent The above link functions have been used extensively in a wide variety of applications in fields
as diverse as medicine, engineering, economics, psychology, education just to name a few.
The logit link function for which
\begin{eqnarray}
\pi(\vx_i)=\Pr[Y_i=1|\vx_i]= \Lambda(\eta(\vx_i)) = \frac{1}{1+e^{-\eta(\vx_i)}}
\label{eq:logit:1}
\end{eqnarray}
is the most commonly used of all of them, probably because it provides a nice
interpretation of the regression coefficients in terms of the ratio of the odds. The popularity of the logit link also comes from
its computational convenience in the sense that its model formulation yields simpler maximum likelihood equations and faster convergence.
In fact, the literature on both the theory and applications based on the logistic distribution is so vast it would be unthinkable to reference even a fraction of it. Some recent authors like \cite{Zelterman:89:1}, \cite{Schumacher:96:1}, \cite{Nadarajah:04:1}, \cite{Lin:08:1} and \cite{Nassar:12:1} provide extensive studies on the characteristics of generalized logistic distributions, somehow answering the ever increasing interest
in the logistic family of distributions. Indeed, applications abound that make use of both the standard logistic regression model and the so-called generalized logistic regression model, as can be seen in \cite{Hout:07:1} and  \cite{Tamura:13:1}.
The probit link, for which
\begin{eqnarray}
\pi(\vx_i)&=&\Pr[Y_i=1|\vx_i]=F(\eta(\vx_i)) \nonumber \\
&=& \Phi(\eta(\vx_i)) = \int_{-\infty}^{\eta(\vx_i)}{\frac{1}{\sqrt{2\pi}}e^{-\frac{1}{2}z^2}dz}
\label{eq:probit:1}
\end{eqnarray}
is the second most commonly used of all the link functions,
with Bayesian researchers seemingly topping the charts in its use. See \cite{Basu:00:1}, \cite{Csato:Fokoue:2000}, \cite{Sounak:09:1} for a few examples of probit use in binary classification in the Bayesian setting. \cite{Armagan:11:1} is just another
one of the references pointing to the use of the probit link function in the statistical data mining and
machine learning communities.

\noindent In the presence of some many possible choices of link functions, the natural question to ask is: how does one go about choosing the
right/suitable/appropriate link function for the problem at hand? Most experts and non-experts alike who deal with binary classification tend to almost automatically choose the logit link, to the point that it - the logit link - has almost been attributed a transcendental place. From experience, experimentation and mathematical proof,
it is our view, a view shared by \cite{Feller} and \cite{Feller:1940:1}, that all these link function are equivalent, both structurally and predictively. Indeed, our conjectured equivalence of binary regression link functions is strongly supported by William Feller in his vehement criticism of the overuse of the logit link function and a tendency to give it a place above the rest of existing link functions. In \cite{Feller}'s own words: {\it  An unbelievably huge literature tried to establish a transcendental "law of logistic growth"; measured in appropriate units, practically all growth processes were supposed to be represented by a function of the form \eqref{eq:logit:1} with $t$ representing time. Lengthy tables, complete with chi-square tests, supported this thesis for human populations, for bacterial colonies, development of railroads, etc. Both height and weight of plants and animals were found to follow the logistic law even though it is theoretically clear that these two variables cannot be subject to the same distribution. Laboratory experiments on bacteria showed that not even systematic disturbances can produce other results. Population theory relied on logistic extrapolations (even though they were demonstrably unreliable). The only trouble with the theory is that not only the logistic distribution but also the normal, the Cauchy, and other distributions can be fitted to the same material with the same or better goodness of fit. In this competition the logistic distribution plays no distinguished role whatever; most contradictory theoretical models can be supported by the same observational material}.

\vspace{0.5cm}

\noindent As a matter of fact, it's obvious from the plot of their densities for instance that the probit and logit are virtually identical, almost superposed one on top of the other. It is therefore not surprising that one would empirically
notice virtually no difference when the two are compared on the same binary regression task.
Despite this apparent indistinguishability due to many of their similarities,
it is fair to recognize that the two functions different, at least by definition and by their very algebra. \cite{Chambers:67:1} argue in their paper that probit and logit will yield different results in
the multivariate context. Their work is a rarety in a context where most researchers seem to have settled
comfortably with the acceptance of the fact that the two links are essentially the same from
a utility perspective. For such researchers,  using one over the other is determined solely by mathematical convenience and a matter of taste. We demonstrate both theoretically and computationally
that they all predictively equivalent in the univariate case, but we also provide a characterization
of the conditions under which they tend to differ in the multivariate context.

\begin{figure}[!h]
\centering
\epsfig{figure=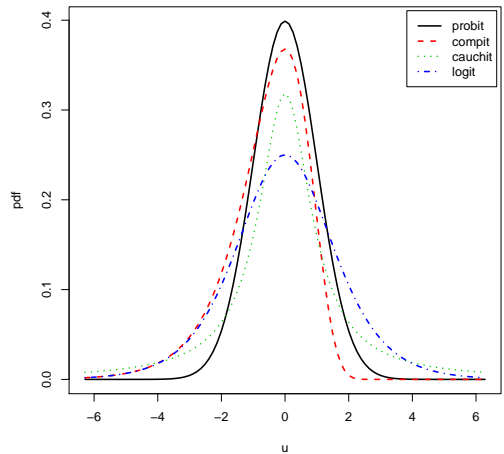, height=7cm, width=7cm} 
\caption{{\it Densities corresponding to the link functions. The similarities are around the center of the distributions. Differences can be seen at the tails}}
\label{fig:links:1a}
\end{figure}

\begin{figure}[!h]
\centering
\epsfig{figure=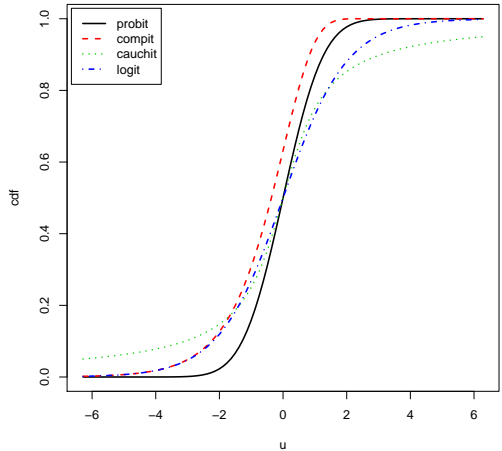, height=7cm, width=7cm}
\caption{{\it Cdfs corresponding to the link functions. The similarities are around the center of the distributions. Differences can be seen at the tails}}
\label{fig:links:1b}
\end{figure}

\noindent Throughout this work, we perform model comparison and model selection using both Akaike Information Criterion (AIC) and
Bayesian Information Criterion (BIC). Taking the view that the ability of an estimator to generalize well over the whole population, provides the best measure of it ultimate utility, we provide extensive comparisons of the performances of each link functions based on their corresponding test error. In the present work, we perform a large number of simulations in various dimensions using both  artificial and real life data. Our results persistently reveal the performance indistinguishability of the links in univariate settings, but some sharp differences begin to appear as the dimension of the input space (number of variables measured) increased.


\noindent The rest of this paper is organized as follows: section $2$ presents some general definitions, namely our meaning of the terms predictive equivalence and structural equivalence, along with some computational demonstrations on simulated and real life data. This section also clearly describes our approach to demonstrating/verifying our claimed results. We show in this section, that for low to moderate dimensional spaces, goodness of fit and predictive performance measures reveal the equivalence between probit and logit. Section $3$ provides our formal proof of the equivalence of probit and logit. Section $4$ reveals that there might be some differences in performance when the input space becomes very large. Our demonstration in this section in based on the famous AT\&T $57$-dimensional Email Spam Data set. Section $5$ provides a conclusion and a discussion, along with insights into extensions of the present work.

\section{Definitions, Methodology and Verification}
Throughout this work, we consider comparing models both on the merits of goodness of fit, and predictive performance.
With that in mind, we can then define equivalence both from a goodness of fit perspective and also from a predictive
optimality perspective.From a predictive analytics perspective for instance, an important question to ask is: given a randomly selected
vector $\vx$,  what is the probability that the prediction made by probit will differ from the one made by logit?
In other words, how often do the probit and logit link functions yield difference predictions? This is particularly important
in predictive analytics in the data mining and machine learning where the nonparametric nature of most models forces the experimenter
to focus on the utility of the estimator rather than its form. We respond to this need by defining what we call the $100(1-\alpha)\%$
predictive equivalence.

\subsection{Basic definitions and results}
\label{subsection:definitions}
\begin{definition}(Binary classifier)
Given an input space $\mathcal{X}$ and a binary response space $\mathcal{Y}=\{0,1\}$, we define a (binary) classifier $h$ to be
a function that maps elements of $\mathcal{X}$ to $\{0,1\}$, or more specifically
$$
\begin{array}{l}
h:\,\,\mathcal{X}  \rightarrow \{0,1\} \\
\qquad\vx \mapsto h(\vx)
\end{array}
$$
In the generalized linear model (GLM) framework, given a link function with corresponding cdf $F(\cdot)$, a binary classifier $h$ under the majority rule takes the
form
$$
{h}(\vx) = \frac{1}{2}\left\{1+\sign\left({\pi}(\vx)-\frac{1}{2}\right)\right\},
$$
where ${\pi}(\vx)=\Pr[Y=1|\vx]=F(\eta(\vx))$ and $\eta(\vx)$ is the linear component. For instance, the logit binary classifier
is given by
$$
h_{logit}(\vx) = \frac{1}{2}\left\{1+\sign\left({\Lambda}(\eta(\vx))-\frac{1}{2}\right)\right\},
$$
and the the probit binary classifier is given by
$$
h_{probit}(\vx) = \frac{1}{2}\left\{1+\sign\left({\Phi}(\eta(\vx))-\frac{1}{2}\right)\right\},
$$
where $\Lambda(\cdot)$ and $\Phi(\cdot)$ are as defined in Table \eqref{tab:links:1}.
\end{definition}

\noindent We shall measure the predictive performance of a classifier $h$ by choosing a loss function $\ell(\cdot,\cdot)$ and
then computing the expected loss (also known as risk functional) $R(h)$ as follows:
$$
R(h) = \mathbb{E}[\ell(Y,h(X))] = \int_{\mathcal{X}\times \mathcal{Y}}{\ell(y,h(\vx)) p(\vx,y)d\vx d y}.
$$
Under the zero-one loss function $\ell(Y,h(X)) = \mathbf{1}_{\{Y \neq h(X)\}}$, the risk functional $R(h)$
is the misclassification rate, more specifically
\begin{eqnarray*}
R(h) &=& \mathbb{E}[\ell(Y,h(X))] \\
&=& \int_{\mathcal{X}\times \mathcal{Y}}{\ell(y,h(\vx)) p(\vx,y)d\vx d y} = \Pr[Y\neq h(X)].
\end{eqnarray*}
In practice, $R(h)$ cannot be computed in closed-form because the distribution of $(X,Y)$ is unknown. We shall therefore use
the so-called the average test error or average empirical prediction error as our predictive performance measure to
compare classifiers.

\begin{definition}(Average Test Error)
Given a sample $\{(\vx_i, y_i), i=1,\cdots,n\}$, we randomly form a training set $\{(\vx_i^{(\tt tr)}, y_i^{(\tt tr)}), i=1,\cdots,n_{\tt tr}\}$ and a test set $\{(\vx_i^{(\tt te)}, y_i^{(\tt te)}), i=1,\cdots,n_{\tt te}\}$. We typically
run $R=1000$ replications of this split, with $2/3$ of the data allocated to the training set and $1/3$ to the test set. The test error here under the symmetric zero-one loss is given by
\begin{eqnarray*}
\hat{R}_{\tt \tiny test}(h) = {\tt TE}(h) &=&
\frac{1}{n_{\tt te}}\sum_{i=1}^{n_{\tt te}}{\mathbf{1}_{\{y_i^{(\tt te)} \neq h(\vx_i^{(\tt te)})\}}} \\
&=& \frac{\#{\{y_i^{(\tt te)} \neq h(\vx_i^{(\tt te)})\}}}{n_{\tt te}},
\end{eqnarray*}
from which the average test error of $h$ over $R$ random splits of the data is given by
$$
{\tt ATE}(h) = \frac{1}{R}\sum_{r=1}^R{{\tt TE}_r(h)},
$$
where ${\tt TE}_r(h)$ is the test error yielded by $h$ on the $r$th split of the data.
\end{definition}

\begin{definition}(Predictively concordant classifiers)
Let $h_1$ and $h_2$ be two classifiers defined on the same $p$-dimensional input space $\mathcal{X}$. We shall say that $h_1$ and $h_2$
are $100(1-\alpha)\%$ predictively concordant if $\forall X \in \mathcal{X}$ drawn according to the density $p_X(\vx)$,
$$
\Pr\Big[{h}_{1}(X) \neq {h}_{2}(X) \Big] = \alpha.
$$
In other words, $h_1$ and $h_2$ are $100(1-\alpha)\%$  predictively concordant if the probability of disagreement between the two classifiers is $\alpha$. When $\alpha=0$, we say that $h_1$ and $h_2$ are perfectly predictively concordant.
\end{definition}

\begin{definition}(Predictively equivalent classifiers)
Let $h_1$ and $h_2$ be two classifiers defined on the same $p$-dimensional input space $\mathcal{X}$. We shall say that $h_1$ and $h_2$
are predictively equivalent if the difference between their average test errors is negligible, i.e.,
${\tt ATE}(h_1) \approx {\tt ATE}(h_2)$.
\end{definition}


\begin{lemma}
\label{lemma:goldenratio}
If $X \sim {\tt Logistic}(0,1)$, and $Y = \sqrt{\frac{\pi}{8}}X$, then $Y \overset{\tiny approx}{\sim } N(0,1)$.
\end{lemma}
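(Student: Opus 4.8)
The plan is to work entirely at the level of cumulative distribution functions and reduce the claim to the classical density-matching approximation of the standard normal by a rescaled logistic. First I would record the effect of the scaling on the law of $X$: since $X \sim {\tt Logistic}(0,1)$ has the standard logistic cdf $\Lambda(\cdot) = 1/(1+e^{-\cdot})$, the transformed variable $Y = \sqrt{\pi/8}\,X$ has cdf
$$
F_Y(y) = \Pr\big[Y \le y\big] = \Pr\Big[X \le \sqrt{8/\pi}\,y\Big] = \Lambda\Big(\sqrt{8/\pi}\,y\Big) = \frac{1}{1+e^{-\sqrt{8/\pi}\,y}}.
$$
The target then becomes the uniform approximation $F_Y(y) \approx \Phi(y)$, equivalently the closeness of the densities $f_Y$ and $\phi$.

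The heart of the argument is that the constant $\sqrt{\pi/8}$ is chosen precisely so that the two bell-shaped densities agree at their common center. Both the logistic and the normal are symmetric about $0$, so immediately $F_Y(0) = \Phi(0) = \haf$, and both densities are even, hence both have vanishing first derivative at the origin. The one remaining free parameter, the peak height, is pinned down by the scaling: differentiating gives $f_Y(0) = \sqrt{8/\pi}\,\Lambda'(0) = \sqrt{8/\pi}\cdot\tfrac14 = \tfrac{1}{\sqrt{2\pi}} = \phi(0)$. Thus value, slope, and peak height all coincide at $y=0$, which is exactly what makes $\sqrt{\pi/8}$ (rather than, say, the variance-matching constant) the natural choice for superimposing the two curves, matching the near-identical central behaviour displayed in Figures \ref{fig:links:1a} and \ref{fig:links:1b}.

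To upgrade this local agreement into the global claim, I would bound the Kolmogorov distance $\sup_y |F_Y(y) - \Phi(y)|$. The difference $D(y) = \Lambda(\sqrt{8/\pi}\,y) - \Phi(y)$ is an odd function (using $\Lambda(-t) = 1-\Lambda(t)$ and $\Phi(-y) = 1-\Phi(y)$) vanishing at $0$ and at $\pm\infty$, with $D'(0) = f_Y(0)-\phi(0) = 0$; locating the critical points by solving $D'(y)=0$ reduces the supremum to a single numerical maximum, which is small (on the order of $10^{-2}$). This numerical tail bound is the step I expect to be the main obstacle, because the two families are genuinely distinct rather than equal: a Taylor expansion about the origin gives $f_Y(y) = \tfrac{1}{\sqrt{2\pi}}(1 - \tfrac{2}{\pi}y^2 + \cdots)$ against $\phi(y) = \tfrac{1}{\sqrt{2\pi}}(1 - \tfrac12 y^2 + \cdots)$, so the quadratic coefficients already disagree ($-2/\pi$ versus $-1/2$). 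Hence the statement is necessarily an approximation, not an identity, and the rigorous content lies entirely in controlling how large $D$ can grow away from the center. This is consistent with the paper's stance that the residual gap is concentrated in the tails and is precisely what the accompanying simulations are intended to corroborate.
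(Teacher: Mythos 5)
Your proposal is correct, and it is genuinely different from---and considerably more substantive than---what the paper offers. The paper does not actually prove Lemma~\ref{lemma:goldenratio}: it replaces the proof with a ``Demonstration'' consisting solely of Figure~\ref{fig:probit:scaled}, i.e.\ a visual check that the scaled logistic cdf overlays the standard normal cdf. You instead supply the mathematical content behind that picture. Your computation of $F_Y(y)=\Lambda\bigl(\sqrt{8/\pi}\,y\bigr)$ is right, and your key observation---that $f_Y(0)=\sqrt{8/\pi}\cdot\tfrac14=\tfrac{1}{\sqrt{2\pi}}=\phi(0)$, so $\sqrt{\pi/8}$ is exactly the constant that matches the peak heights of the two symmetric densities---is the real explanation of why this particular scaling (rather than, say, the variance-matching constant $\sqrt{3}/\pi$) makes the curves nearly superimpose. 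Your symmetry argument (the difference $D$ is odd, vanishes at $0$ and $\pm\infty$, and has $D'(0)=0$) correctly reduces the claim to a single numerical supremum, which is indeed of order $10^{-2}$, and your Taylor comparison of the quadratic coefficients ($-2/\pi$ versus $-\tfrac12$) correctly shows the statement cannot be sharpened to an identity. What your approach buys is an actual argument with an identified, checkable residual error; what the paper's approach buys is only brevity. The one caveat is shared by both: since the lemma asserts only ``$\overset{approx}{\sim}$'' without quantifying the tolerance, the final step in either treatment is unavoidably numerical or graphical rather than fully analytic, and your version at least isolates exactly where that numerical input enters.
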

\noindent \textit{Demonstration:}
Figure (\ref{fig:probit:scaled}) below shows that the scaled version of the logistic cdf lines up almost perfectly with the standard normal.
\begin{figure}[!htpb]
\centering
\epsfig{figure=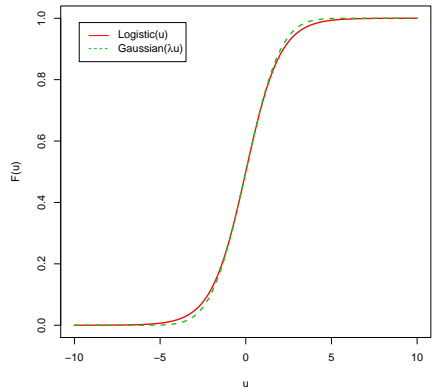, width=6cm, height=6cm}
\caption{Scaled version of the logistic CDF superposed on the standard normal CDF.}
\label{fig:probit:scaled}
\end{figure}

\begin{lemma}
\label{lemma:probit}
Let $\Phi(\cdot)$ denote the standard normal cdf.  Then 
$$
\sign\left({\Phi}(z)-\frac{1}{2}\right) = \sign\left({\Phi}\left(\sqrt{\frac{\pi}{8}} z\right)-\frac{1}{2}\right).
$$
\end{lemma}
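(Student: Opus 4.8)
The plan is to reduce both sides of the claimed identity to the same elementary expression, namely $\sign(z)$, by exploiting two structural facts about the standard normal cdf: strict monotonicity and the symmetry value $\Phi(0)=\tfrac{1}{2}$. First I would record that $\Phi(\cdot)$ is continuous and strictly increasing on all of $\bbR$, and that by the symmetry of the standard normal density about the origin we have $\Phi(0)=\tfrac{1}{2}$. From these two properties it follows immediately that $\Phi(z)-\tfrac{1}{2}$ has the same sign as $z$ itself: $\Phi(z)>\tfrac{1}{2}$ precisely when $z>0$, $\Phi(z)=\tfrac{1}{2}$ precisely when $z=0$, and $\Phi(z)<\tfrac{1}{2}$ precisely when $z<0$. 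In compact form, this is the single observation
$$
\sign\!\left(\Phi(z)-\haf\right)=\sign(z),
$$
valid for every $z\in\bbR$.

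Next I would apply this same observation to the scaled argument. Writing $c=\sqrt{\pi/8}$ and noting that $c>0$, the identity above evaluated at $cz$ in place of $z$ yields
$$
\sign\!\left(\Phi(cz)-\haf\right)=\sign(cz).
$$
The final ingredient is the trivial fact that multiplication by a strictly positive constant does not change sign, so $\sign(cz)=\sign(z)$. Chaining these together gives
$$
\sign\!\left(\Phi\!\left(\sqrt{\tfrac{\pi}{8}}\,z\right)-\haf\right)=\sign(cz)=\sign(z)=\sign\!\left(\Phi(z)-\haf\right),
$$
which is exactly the asserted equality.

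I do not anticipate a genuine obstacle here; the content of the lemma is that the decision boundary induced by the probit classifier is invariant under positive rescaling of its argument, and this is precisely why the scaling constant from Lemma \ref{lemma:goldenratio} does not perturb the sign pattern. The only point requiring a word of care is the degenerate case $z=0$, where both sides equal $\sign(0)$; this is handled uniformly by the same monotonicity-plus-symmetry argument, since $\Phi(0)=\Phi(c\cdot 0)=\tfrac{1}{2}$ forces both arguments of $\sign(\cdot)$ to vanish simultaneously. The strict monotonicity of $\Phi$ is what guarantees the sign relationship is an equivalence rather than a one-sided implication, and the positivity of $\sqrt{\pi/8}$ is what preserves orientation under the rescaling, so both hypotheses are used exactly once.
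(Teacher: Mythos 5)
Your proof is correct and complete. The paper itself states this lemma without supplying any proof, so there is nothing to compare against; your argument --- that strict monotonicity of $\Phi$ together with $\Phi(0)=\tfrac{1}{2}$ gives $\sign\left(\Phi(z)-\tfrac{1}{2}\right)=\sign(z)$, and that the strictly positive constant $\sqrt{\pi/8}$ preserves sign --- is exactly the intended justification and handles the boundary case $z=0$ properly.
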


\begin{theorem}
The probit and logit link functions are perfectly predictively concordant. Specifically, given an input space
$\mathcal{X}$ and a density $p_X(\vx)$ on $\mathcal{X}$,
$$
\Pr\Big[{h}_{\tt logit}(X) \neq {h}_{\tt probit}(X) \Big] = 0,
$$
for all $X \in \mathcal{X}$ drawn according to $p_X(\vx)$.
\end{theorem}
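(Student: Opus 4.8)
The plan is to prove the stronger pointwise statement that $h_{\tt logit}(\vx) = h_{\tt probit}(\vx)$ for \emph{every} $\vx \in \mathcal{X}$, from which the probability-zero claim follows immediately: once the disagreement event $\{h_{\tt logit}(X) \neq h_{\tt probit}(X)\}$ is shown to be empty, it has probability $0$ under any density $p_X$. Unwinding the definitions of the two classifiers, the equality $h_{\tt logit}(\vx) = h_{\tt probit}(\vx)$ holds precisely when
$$
\sign\left(\Lambda(\eta(\vx)) - \haf\right) = \sign\left(\Phi(\eta(\vx)) - \haf\right),
$$
so the whole problem collapses to comparing these two sign functions at the common linear predictor $\eta(\vx)$.

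First I would record the elementary but decisive fact that both $\Lambda$ and $\Phi$ are continuous, strictly increasing cumulative distribution functions of distributions symmetric about the origin, so that $\Lambda(0) = \Phi(0) = \haf$. Monotonicity then yields, for every real $u$, the exact identities $\sign(\Lambda(u) - \haf) = \sign(u)$ and $\sign(\Phi(u) - \haf) = \sign(u)$. Setting $u = \eta(\vx)$ makes both sides of the displayed equation equal to $\sign(\eta(\vx))$, which establishes the pointwise agreement and hence the theorem.

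To tie this directly to the machinery already assembled, I would route the argument through Lemmas~\ref{lemma:goldenratio} and~\ref{lemma:probit}. Lemma~\ref{lemma:goldenratio} identifies the logistic classifier with a scaled probit classifier, since $\Lambda(\eta) \approx \Phi\!\left(\sqrt{\pi/8}\,\eta\right)$; because the sign depends only on which side of the median the argument lies, and both $\Lambda$ and $\Phi(\sqrt{\pi/8}\,\cdot)$ cross $\haf$ at exactly $\eta = 0$, this step in fact holds as an \emph{exact} equality of signs rather than merely an approximation. Lemma~\ref{lemma:probit} then supplies $\sign(\Phi(\sqrt{\pi/8}\,\eta) - \haf) = \sign(\Phi(\eta) - \haf)$, closing the chain back to the unscaled probit.

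The statement carries no serious obstacle, and the only point requiring care is conceptual rather than technical: one must not be misled into thinking that the logistic-to-normal approximation of Lemma~\ref{lemma:goldenratio} injects classification error. The sign function is insensitive to everything except the location of the median, and since the positive scale factor $\sqrt{\pi/8}$ leaves that location fixed at $0$, the decision boundaries of the two links coincide \emph{exactly} even though their probability estimates differ away from $\haf$. A secondary detail is the degenerate set $\{\vx : \eta(\vx) = 0\}$, where $\sign(\cdot) = 0$ and both classifiers return the common value $\haf$; the two still agree there, and in any case this set has $p_X$-measure zero for continuous predictors, so it cannot contribute to the disagreement probability.
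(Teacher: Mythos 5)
Your proof is correct, and your primary argument is actually cleaner and more rigorous than the one in the paper. The paper's proof rewrites the disagreement event $E$ by substituting $\Lambda(\eta(X)) \approx \Phi\bigl(\sqrt{\pi/8}\,\eta(X)\bigr)$ from Lemma~\ref{lemma:goldenratio} and then invokes Lemma~\ref{lemma:probit} to kill the rescaling; this uses an \emph{approximate} identity to rewrite an event \emph{exactly}, which is a logical soft spot (an approximation could in principle flip a sign near the decision boundary). Your direct route avoids this entirely: since $\Lambda$ and $\Phi$ are both continuous, strictly increasing cdfs with median $0$, you get the exact identities $\sign(\Lambda(u)-\tfrac{1}{2}) = \sign(u) = \sign(\Phi(u)-\tfrac{1}{2})$ for every real $u$, hence pointwise agreement of the classifiers everywhere, hence probability zero of disagreement under any $p_X$. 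This argument is both more elementary and more general --- it shows at once that \emph{any} two link functions whose cdfs are strictly increasing with median $0$ (so probit, logit, and cauchit, though not compit, whose cdf satisfies $1-e^{-e^{0}} \neq \tfrac{1}{2}$) are perfectly predictively concordant, which the paper's lemma-specific route does not deliver. Your secondary observation --- that the scale factor $\sqrt{\pi/8}$ preserves the location of the median so the sign identity in the lemma chain is exact even though the cdf approximation is not --- is precisely the repair the paper's own proof needs, and your handling of the tie set $\{\eta(\vx)=0\}$ is a detail the paper omits. No gaps.
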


\begin{proof}
For a given $X$, Let $E$ be the event
$$
E = \left\{\sign\left({\Lambda}(\eta(X))-\frac{1}{2}\right) \neq \sign\left({\Phi}(\eta(X))-\frac{1}{2}\right)\right\}
$$
we must show that $\delta = \Pr[E]=0$.
Based on Lemma \eqref{lemma:goldenratio}, we can write $E$ as
$$
E = \left\{\sign\left({\Phi}\left(\sqrt{\frac{\pi}{8}}\eta(X)\right)-\frac{1}{2}\right) \neq \sign\left({\Phi}(\eta(X))-\frac{1}{2}\right)\right\}.
$$
Then $\delta = \Pr[E]=0$.
Thanks to Lemma \eqref{lemma:probit}, it is straightforward to see that $\delta=0$.
\end{proof}

\begin{definition}
Let $M_1$ and $M_2$ be two binary regression models based on two different link functions defined on the same $p$-dimensional input space.
We shall say that $M_1$ and $M_2$ are structurally equivalent if there exists a nonzero real constant $\lambda \in \mathbb{R}^*$ such that $\beta_j^{(\tt M_1)} \approx \lambda \beta_j^{(\tt M_2)}$ for all $j=1,\cdots,p$. In other words, the parameters of $M_1$ are just a scaled version of the parameters of $M_2$, so that knowing the parameters of $M_1$ is sufficient
to completely determine the parameters of $M_2$, and vice-versa.
\end{definition}

\begin{theorem}
The logit and probit models are structurally equivalent.
\end{theorem}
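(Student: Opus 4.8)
The plan is to leverage the scaling relationship already established in Lemma~\ref{lemma:goldenratio}, which asserts that a logistic variate rescaled by $\sqrt{\pi/8}$ is approximately standard normal. At the level of cumulative distribution functions this is equivalent to the pointwise approximation $\Lambda(u) \approx \Phi\!\left(\sqrt{\pi/8}\,u\right)$ for every $u \in \mathbb{R}$. I would open the proof by recording this identity explicitly, since it is the single fact that drives the whole argument: it converts a statement about \emph{two different} links into a statement about \emph{one} link evaluated at two different linear predictors, which is exactly the form in which a structural (coefficient-level) comparison becomes visible.

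Next I would write out the fitted success probabilities under each model: the logit model posits $\pi(\vx) = \Lambda(\tilde{\vx}^\top \bfbeta^{(\mathrm{logit})})$ and the probit model posits $\pi(\vx) = \Phi(\tilde{\vx}^\top \bfbeta^{(\mathrm{probit})})$. Since both are fit to the same conditional response probabilities, I want the two right-hand sides to coincide. Substituting the scaling identity into the logit expression gives $\Lambda(\tilde{\vx}^\top \bfbeta^{(\mathrm{logit})}) \approx \Phi\!\left(\sqrt{\pi/8}\,\tilde{\vx}^\top \bfbeta^{(\mathrm{logit})}\right)$, so a probit model whose linear predictor equals $\sqrt{\pi/8}\,\tilde{\vx}^\top \bfbeta^{(\mathrm{logit})}$ reproduces the same probabilities.

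The third step is to strip off the common link. Because $\Phi$ is strictly increasing, hence injective, the (approximate) equality of $\Phi(\tilde{\vx}^\top \bfbeta^{(\mathrm{probit})})$ and $\Phi(\sqrt{\pi/8}\,\tilde{\vx}^\top \bfbeta^{(\mathrm{logit})})$ forces $\tilde{\vx}^\top \bfbeta^{(\mathrm{probit})} \approx \sqrt{\pi/8}\,\tilde{\vx}^\top \bfbeta^{(\mathrm{logit})}$. As this must hold for every $\vx$ in the input space, I would evaluate along the coordinate directions (or invoke that two linear functionals agreeing on all of $\mathcal{X}$ share the same coefficient vector) to conclude $\bfbeta^{(\mathrm{probit})} \approx \sqrt{\pi/8}\,\bfbeta^{(\mathrm{logit})}$, i.e. $\beta_j^{(\mathrm{probit})} \approx \lambda\,\beta_j^{(\mathrm{logit})}$ for every $j$ with the explicit nonzero constant $\lambda = \sqrt{\pi/8}$. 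By the Definition of structural equivalence, this is precisely the required conclusion.

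The main obstacle is honesty about the word ``approximately.'' Lemma~\ref{lemma:goldenratio} is an approximation rather than an identity, so the equality of the two linear predictors can hold only approximately, and one must check that the error at the cdf level is not amplified when passing back through $\Phi^{-1}$, which is steep in the tails. I would address this by noting that the definition of structural equivalence is itself stated with ``$\approx$'', so a uniform-on-compacts control of the logistic--normal cdf gap suffices; a cleaner route, if a sharp constant is desired, is to pin $\lambda$ by matching the two densities' slopes at the origin, which reproduces $\lambda = \sqrt{\pi/8}$ and explains why the fit is tightest near the center of the distribution --- exactly the regime governing the sign decisions responsible for the perfect predictive concordance established earlier.
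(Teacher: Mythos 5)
Your proposal is correct and follows essentially the same route as the paper's proof: both rest entirely on Lemma~\ref{lemma:goldenratio} to write $\Lambda(\tilde{\vx}^\top\bfbeta^{(\mathrm{logit})}) \approx \Phi\bigl(\tilde{\vx}^\top\sqrt{\pi/8}\,\bfbeta^{(\mathrm{logit})}\bigr)$ and then identify $\bfbeta^{(\mathrm{probit})} \approx \sqrt{\pi/8}\,\bfbeta^{(\mathrm{logit})}$, so $\lambda = \sqrt{\pi/8}$. Your added care in stripping off $\Phi$ via injectivity and in flagging how the approximation propagates is a modest refinement of, not a departure from, the paper's argument.
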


\begin{proof}
Thanks to Lemma \eqref{lemma:goldenratio}, we can write
\begin{eqnarray*}
\Lambda(\vx^\top \bfbeta^{(\tt logit)}) &\approx& \Phi\left(\sqrt{\frac{\pi}{8}} \vx^\top \bfbeta^{(\tt logit)}\right) \\
&=& \Phi\left(\vx^\top \sqrt{\frac{\pi}{8}}\bfbeta^{(\tt logit)}\right)=\Phi(\vx^\top \bfbeta^{(\tt probit)}),
\end{eqnarray*}
where
$$
\bfbeta^{(\tt probit)} \approx \sqrt{\frac{\pi}{8}} \bfbeta^{(\tt logit)}.
$$
We have therefore found a nonzero real constant $\lambda=\sqrt{\frac{\pi}{8}}$ such that $\bfbeta^{(\tt probit)} \approx \lambda \bfbeta^{(\tt logit)}$.
\end{proof}

\subsection{Computational Verification via Simulation}
To get deeper into how strongly related the probit and logit models are, we now seek to estimate via simulation, the constant coefficient that relates their parameter estimates. Indeed, we conjecture that $\hat{\bfbeta}^{(\tt logit)}$ and $\hat{\bfbeta}^{(\tt probit)}$ are linearly related via the regression equation
$$
\hat{\bfbeta}^{(\tt probit)} = \tau + \theta \hat{\bfbeta}^{(\tt logit)} + \nu,
$$
where $\tau$ is the intercept and $\nu$ is the noise term. To estimate one instance of $\theta$, we generate $M$ random replications of the dataset, and for each replication we estimate a copy of $\hat{\bfbeta}$, and with it we also compute an estimate of $\rho = {\tt cor}(\hat{\bfbeta}^{(\tt probit)}, \hat{\bfbeta}^{(\tt logit)})$ the correlation coefficient between $\hat{\bfbeta}^{(probit)}$ and $\hat{\bfbeta}^{(logit)}$.
By repeating the estimation $R$ times, we gather data to determine the central tendency of $\theta$ and the corresponding correlation.

\vspace{0.2cm}

\hrule
\begin{itemize}
\item[] {\tt For r = 1 to R}
 \begin{itemize}
    \item[] {\tt For s = 1 to S}
    \begin{itemize}
    \item {\tt Generate a replicate of the random sample of $\{(\vx_i,y_i), i=1,\cdots,n\}$}
    \item {\tt Estimate the logit and probit model coefficients} $\hat{\bfbeta}_s^{(logit)}$ and $\hat{\bfbeta}_s^{(probit)}$
    \end{itemize}
    \item[] {\tt End}
    \item {\tt Store the simulated data $\mathcal{D}^{(r)}=\{(\hat{\bfbeta}_s^{(logit)}, \hat{\bfbeta}_s^{(probit)}), \,\, s=1,\cdots,S\}$}
    \item {\tt Fit $\mathcal{M}^{(r)}$, the regression  model $\hat{\bfbeta}_s^{(probit)} = \tau + \theta\hat{\bfbeta}_s^{(logit)} + \nu_s$ using $\mathcal{D}^{(r)}$}
    \item {\tt Extract the coefficient $\hat{\theta}^{(r)}$ from $\mathcal{M}^{(r)}$}
    \item {\tt Compute $\hat{\rho}^{(r)}$ estimate of correlation between $\hat{\bfbeta}^{(probit)}$ and $\hat{\bfbeta}^{(logit)}$}
 \end{itemize}
\item[] {\tt Collect $\{\hat{\theta}^{(r)}\, \texttt{and} \, \hat{\rho}^{(r)}, \,\, r=1,\cdots,R\}$, then compute relevant statistics.}
\end{itemize}

\hrule

\vspace{0.2cm}

\noindent {\tt Example 1:} We consider a random sample of $n=199$ observations $\{(\vx_i, y_i), i=1,\cdots,n\}$ where the $\vx_i$ are equally spaced points in an interval $[a,b]$, that is, $\vx_i = a + \left(\frac{b-a}{n-1}\right)(i-1)$, and $y_i$ are drawn from one of the binary regression models. For instance, we set the domain of $\vx_i$ to $[a,b]=[0,1]$ and generate the $Y_i$'s from a Cauchit model with slope $1/2$ and intercept $0$, i.e., $Y_i \overset{iid}{\sim} {\tt Bernoulli}(\pi(\vx_i))$, with
$$
\Pr[Y_i = 1 | \vx_i] = \pi(\vx_i) = \frac{1}{\pi}\left[\tan^{-1}\left(\frac{1}{2} \vx_i\right)+\frac{\pi}{2}\right],
$$
Using $R=99$ replications each running $S=199$ random samples, we obtain the following results, see Fig (\ref{fig:sim:1}). The most striking finding here is that the estimated coefficient of determination is roughly equal to $1$, indicating that the knowledge of logit coefficient almost entirely helps determine the value
of the probit coefficient. Hence our claim of structural equivalence between probit and logit. The value of the slope $\theta$ appears to be in the neighborhood of $0.6$.

\begin{figure}[!htpb]
\centering
\epsfig{figure=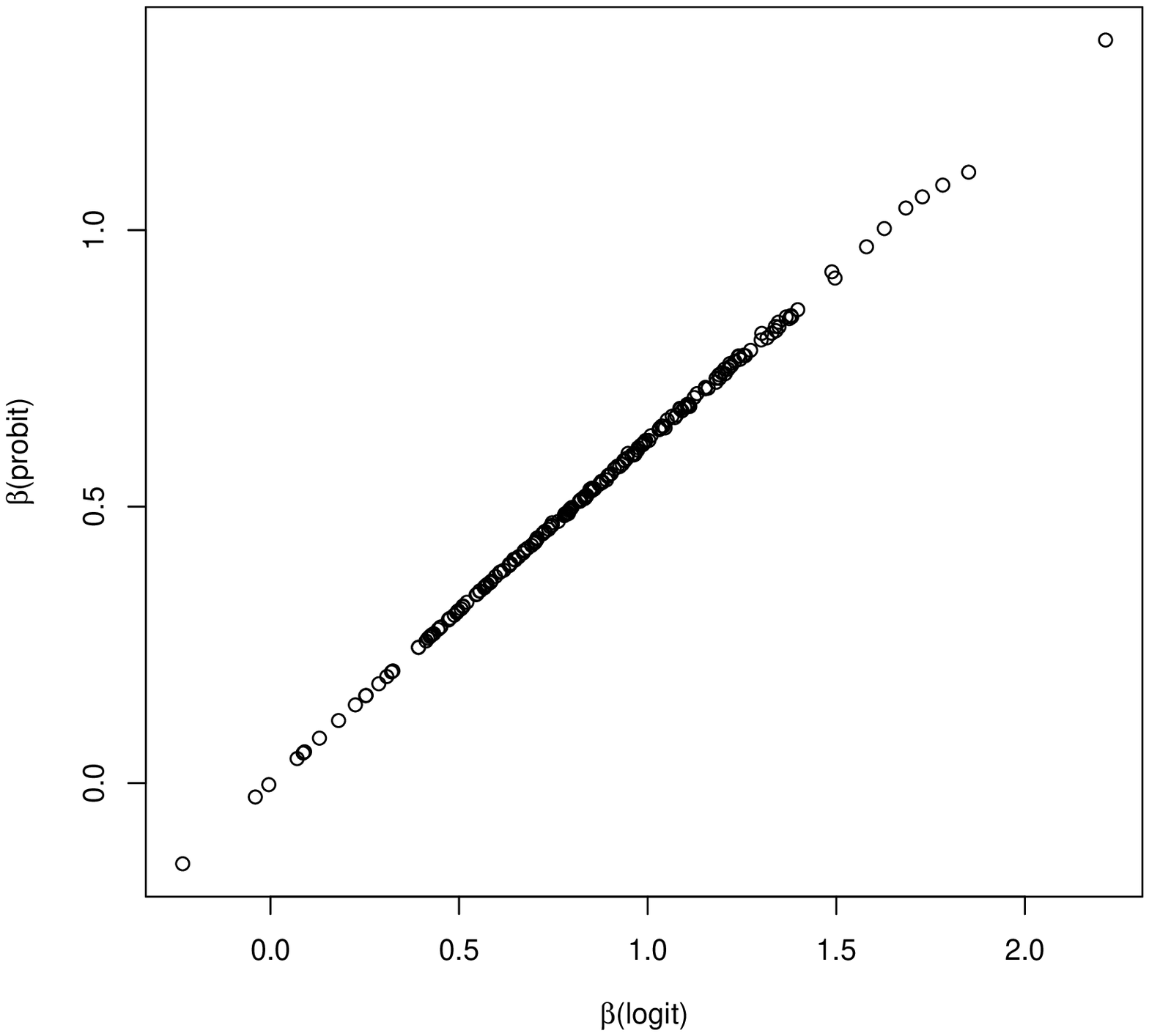, height=4cm, width=4cm}
\epsfig{figure=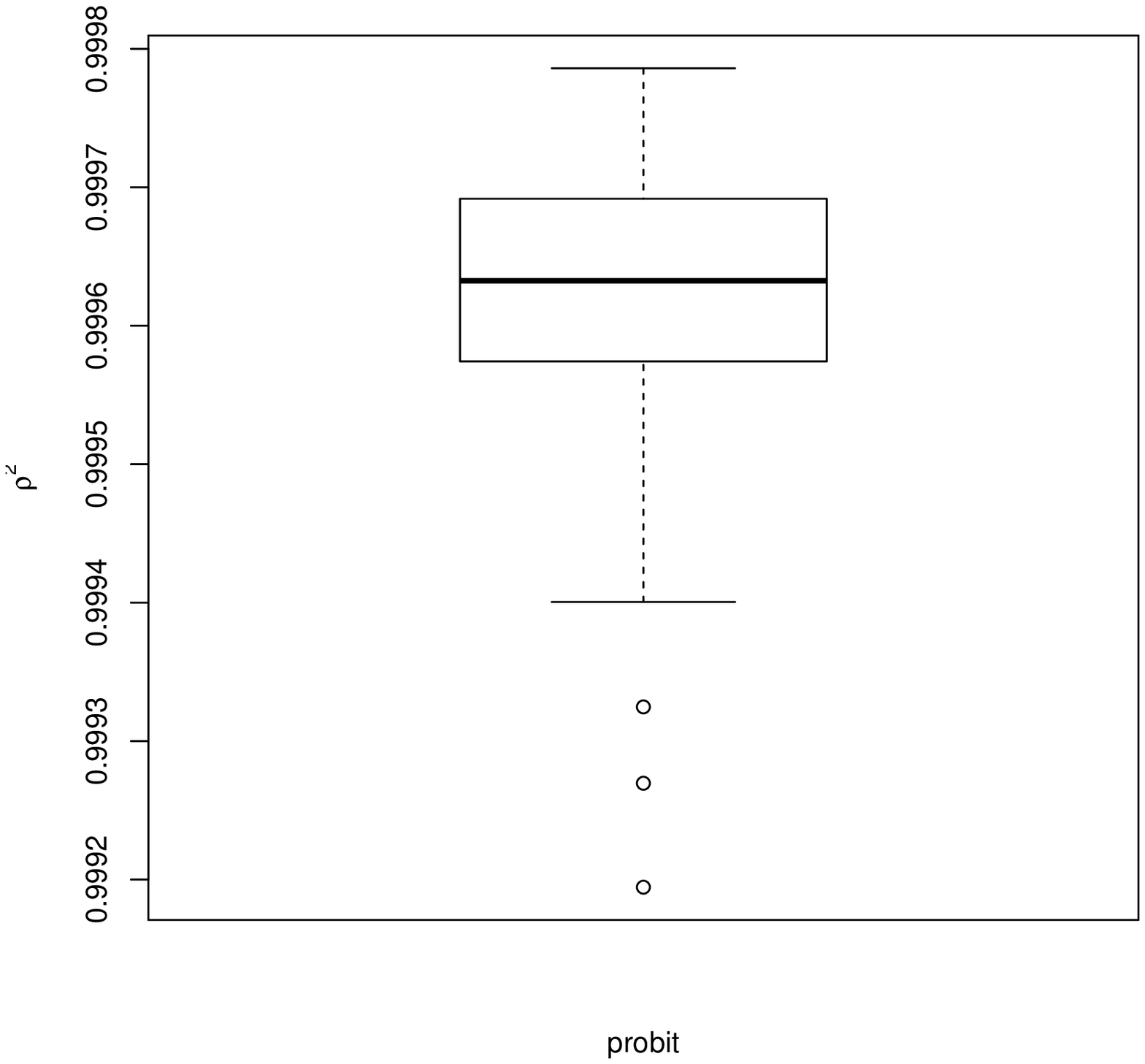, height=4cm, width=4cm}
\epsfig{figure=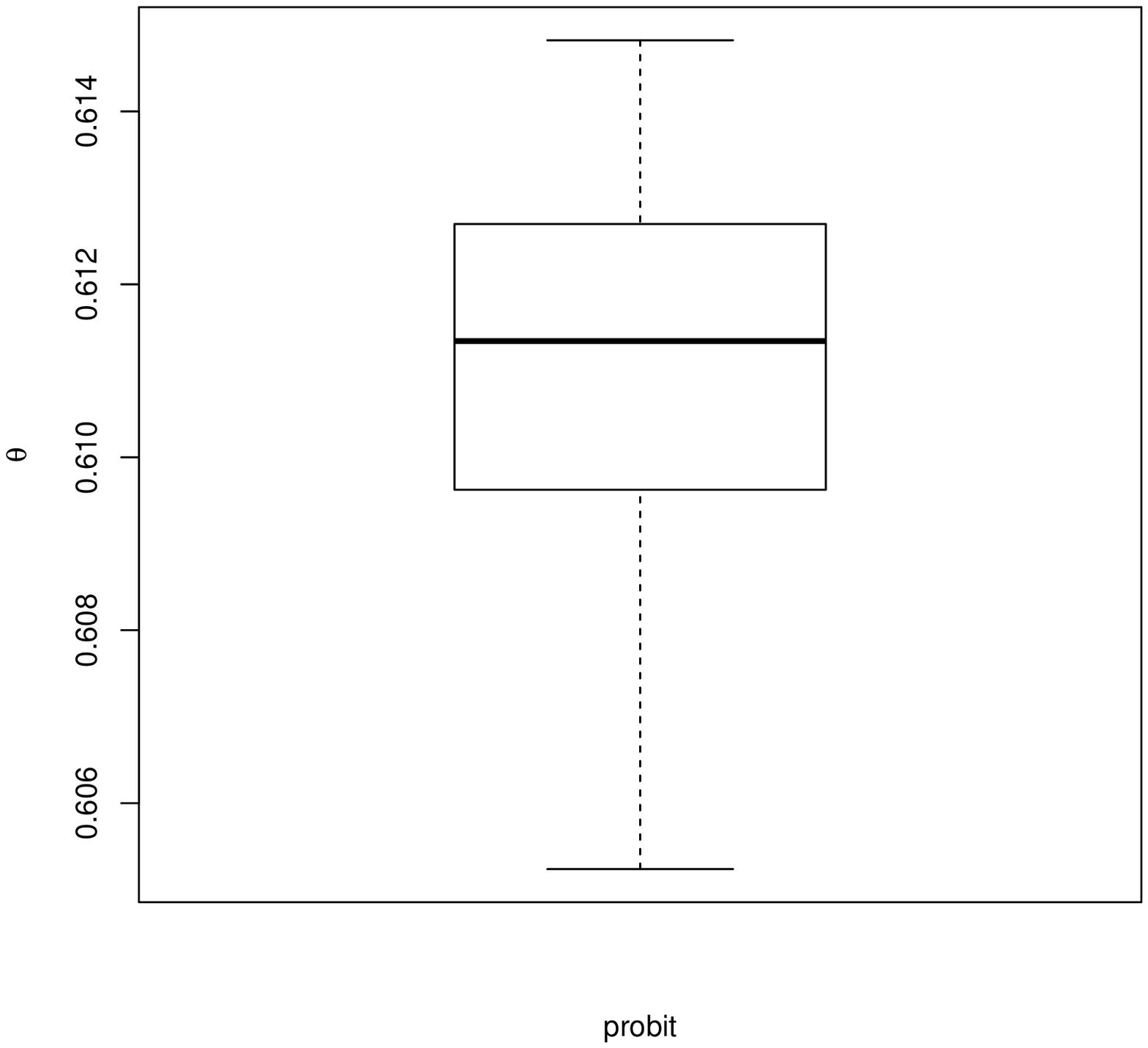, height=4cm, width=4cm}
\caption{\it (Left) Scatterplot of $\hat{\bfbeta}^{(probit)}$ against $\hat{\bfbeta}^{(logit)}$ based on the $R$ replications generated; (Center) Boxplot of the $R$ replications of the estimate of the coefficient of determination between $\hat{\bfbeta}^{(probit)}$ and $\hat{\bfbeta}^{(logit)}$; (Right)
Boxplot of the $R$ replications of the estimate of the slope $\theta$} 
\label{fig:sim:1}
\end{figure}

\noindent {\tt Example 2:} We now consider the famous Pima Indian Diabetes dataset, and obtain parameter estimates under both the logit and the probit models.
The dataset is $7$-dimensional, with $\rx_1={\tt npreg}$, $\rx_2={\tt glu}$, $\rx_3={\tt bp}$, $\rx_4={\tt skin}$, $\rx_5={\tt bmi}$, $\rx_6={\tt ped}$
and $\rx_7={\tt age}$. Under the logit model, the probability that patient $i$ has diabetes given its characteristics $\vx_i$ is given by
$$
\Pr[{\tt Diabetes}_i = 1 | \vx_i] = \pi(\vx_i) = \frac{1}{1+e^{-\eta(\vx_i)}},
$$
where
\begin{eqnarray*}
\eta(\vx_i) &=& \beta_0 + \beta_1 {\tt npreg} + \beta_2 {\tt glu} + \beta_3 {\tt bp}
+ \beta_4{\tt skin} \\
&+&\beta_5 {\tt bmi} +\beta_6 {\tt ped} + \beta_7 {\tt age}.
\end{eqnarray*}
We obtain the parameter estimates using {\tt R}, and we display in the following table their values.

\begin{table}[!h]
  \centering
  \begin{tabular}{lccccccc}
  \toprule
  {\tt Model}  & {\tt npreg} & {\tt glu} & {\tt bp} & {\tt skin}  \\ \hline
  \vspace{0.25cm}
  {\tt Probit}  & $0.0592$ & $0.0192$ & $-0.0024$ & $-0.0017$ \\
  \vspace{0.25cm}
  {\tt  Logit}  & $0.1031$ & $0.0321$ & $-0.0047$ & $-0.0019$ \\ \hline
  {\tt  Ratio}  & $0.57434$ & $0.5987$ & $0.5181$ & $0.9073$ \\
  \bottomrule
  \end{tabular}
  \caption{\em Parameter estimates under probit and logit for the Pima Indian Diabetes Data Set}
  \label{tab:pima:1a}
\end{table}

\begin{table}[!h]
  \centering
  \begin{tabular}{lccc}
  \toprule
  {\tt Model}  & {\tt bmi} & {\tt ped} & {\tt age} \\ \hline
  \vspace{0.25cm}
  {\tt Probit}  &  $0.0505$ & $ 1.0682$ & $0.0249$ \\
  \vspace{0.25cm}
  {\tt  Logit}  &  $0.0836$ & $1.8204$ & $0.0411$ \\ \hline
  {\tt  Ratio}  &  $0.6044$ & $0.5868$ & $0.6064$ \\
  \bottomrule
  \end{tabular}
  \caption{\em Parameter estimates under probit and logit for the Pima Indian Diabetes Data Set}
  \label{tab:pima:1b}
\end{table}

\noindent  As can be seen in the above Table (\ref{tab:pima:1}), the ratio of the probit coefficient over the logit coefficient is still a number around $0.6$ for almost all the parameter. Indeed, the relationship
$$
\hat{\beta}_j^{(probit)} \simeq \tau + 0.6 \hat{\beta}_j^{(logit)} + \nu
$$
appears to still hold true. The deviation from that pattern observed in variable {\tt skin} is probably due to the extreme outlier in its distribution. It is important to note that although our theoretical justification was built under the simplified setting of a univariate model with no intercept, the relationship uncovered still holds true in a complete multivariate setting, with each predictor variable obeying the same relationship.

\noindent  {\tt Example 3:} We also consider the benchmark Crabs Leptograpsus dataset, and obtain parameter estimates under both the logit and the probit models.
The dataset is $5$-dimensional, with $\rx_1={\tt FL}$, $\rx_2={\tt RW}$, $\rx_3={\tt CL}$, $\rx_4={\tt CW}$ and $\rx_5={\tt BD}$.
Under the logit model, the probability that the sex of crab $i$ is {\tt male} given its characteristics $\vx_i$ is given by
$$
\Pr[{\tt sex}_i = 1 | \vx_i] = \pi(\vx_i) = \frac{1}{1+e^{-\eta(\vx_i)}},
$$
where
$$
\eta(\vx_i) = \beta_0 + \beta_1 {\tt FL} + \beta_2 {\tt RW} + \beta_3 {\tt CL} + \beta_4{\tt CW}
+\beta_5 {\tt BD}.
$$
We obtain the parameter estimates using {\tt R}, and we display in the following table their values.

\begin{table}[!h]
  \centering
  \begin{tabular}{lccccc}
  \toprule
  {\tt Model}  & {\tt FL} & {\tt RW} & {\tt CL} &  {\tt BD}\\ \hline
  \vspace{0.25cm}
  {\tt Probit}  & $-3.5572$ & $-11.4801$ & $5.5364$ & $1.6651$ \\
  \vspace{0.25cm}
  {\tt  Logit}  & $-6.1769$ & $-19.9569$ & $9.6643$ & $2.8927$  \\ \hline
  {\tt  Ratio}  & $0.5758$ & $0.5752$ & $0.5728$ & $0.5756$ \\
  \bottomrule
  \end{tabular}
  \caption{\em Parameter estimates under probit and logit for the Crabs Leptograpsus Data Set}
  \label{tab:crabs:1}
\end{table}

\noindent As can be seen in the above Table (\ref{tab:crabs:1}), the estimate $\hat{\theta}$ of the ratio $\theta$ of the probit coefficient over the logit coefficient is still a number around $0.6$ for almots all the parameter. Indeed, the relationship
$$
\hat{\beta}_j^{(probit)} \simeq \tau + 0.6 \hat{\beta}_j^{(logit)} + \nu
$$
appears to still hold true.  It is important to note that although our theoretical justification was built under the simplified setting of a univariate model with no intercept, the relationship uncovered still holds true in a complete multivariate setting, with each predictor variable obeying the same relationship.

\begin{fact}
As can be seen from the examples above, the value of $\hat{\theta}$ lies in the neighborhood of $0.6$, regardless of the task under consideration.
This supports and confirms our conjecture  that there is a fixed linear relationship between probit coefficients and logit coefficients to the point that knowing one implies knowing the other. Hence, the two models are structurally equivalent. In a sense, wherever logistic regression has been used successfully, probit regression will do just as a job. This result confirms what was already noticed and strongly expressed by
\cite{Feller} (pp 52-53).
\end{fact}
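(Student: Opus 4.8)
The plan is to show that the empirically observed slope $\hat{\theta}\approx 0.6$ is in fact an estimate of the universal constant $\lambda=\sqrt{\frac{\pi}{8}}\approx 0.6267$ identified in the structural equivalence theorem, and crucially that this constant does not depend on the data-generating mechanism. Since the Fact bundles together three claims — that a fixed linear relationship exists, that its slope is dataset-independent, and that the intercept is essentially zero — I would organize the argument around pinning down $\theta$ and $\tau$ analytically and then explaining why the replications concentrate around those values.

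First I would recall that Lemma \eqref{lemma:goldenratio} is equivalent to the pointwise statement $\Lambda(u)\approx \Phi\!\left(\sqrt{\frac{\pi}{8}}\,u\right)$ holding \emph{uniformly in} $u$, with a sup-norm gap that is a fixed small number independent of any model or sample. Next, for a common design $\{\vx_i\}$ I would argue that a well-fitted logit model and a well-fitted probit model must reproduce nearly the same conditional success probabilities at the observed points, so that $\Phi(\vx_i^\top \hat{\bfbeta}^{(probit)})\approx \Lambda(\vx_i^\top \hat{\bfbeta}^{(logit)})\approx \Phi\!\left(\sqrt{\frac{\pi}{8}}\,\vx_i^\top \hat{\bfbeta}^{(logit)}\right)$. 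Then, using the strict monotonicity (hence injectivity) of $\Phi$, I would equate the linear predictors, obtaining $\vx_i^\top \hat{\bfbeta}^{(probit)}\approx \vx_i^\top\!\left(\sqrt{\frac{\pi}{8}}\,\hat{\bfbeta}^{(logit)}\right)$ for every $i$. Because the design points generically span $\mathbb{R}^{p+1}$, matching coefficients componentwise yields $\hat{\bfbeta}^{(probit)}\approx \sqrt{\frac{\pi}{8}}\,\hat{\bfbeta}^{(logit)}$, i.e. slope $\theta\approx \sqrt{\frac{\pi}{8}}$ and intercept $\tau\approx 0$. Since the only constant entering this derivation is $\sqrt{\frac{\pi}{8}}$, which comes solely from the link approximation and never from the distribution of $(\vx,y)$, the slope is forced to be the same number — roughly $0.6$ — for the Cauchit-generated data of Example 1, the Pima data, and the Crabs data alike.

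The hard part will be making the phrase ``both models reproduce nearly the same probabilities'' precise, because maximum likelihood does not force the two fitted probability vectors to coincide exactly. I would handle this by a perturbation argument on the score equations: treating the candidate probit fit as the image of the logit fit under the rescaling $\bfbeta\mapsto\sqrt{\frac{\pi}{8}}\,\bfbeta$, I would show that the probit score evaluated at $\sqrt{\frac{\pi}{8}}\,\hat{\bfbeta}^{(logit)}$ is of the order of the uniform link gap, so that the true probit maximizer differs from $\sqrt{\frac{\pi}{8}}\,\hat{\bfbeta}^{(logit)}$ by a controllably small amount. This same error term is what surfaces empirically as the noise $\nu$ and as the mild per-coordinate departures from exact proportionality, for instance the inflated ratio on the {\tt skin} variable. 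Finally, I would argue that averaging over the $R$ replications suppresses this sampling noise, which explains why $\hat{\theta}$ concentrates so tightly near $\sqrt{\frac{\pi}{8}}$ and why $\hat{\tau}$ is negligible, thereby confirming the fixed, dataset-independent linear relationship asserted in the Fact and anticipated by \cite{Feller}.
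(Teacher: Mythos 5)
Your argument is sound at the heuristic level the paper operates on, but it reaches the constant by a genuinely different route than the paper does. The paper's own support for $\hat{\theta}\approx 0.6$ is twofold: the numerical evidence of Examples 1--3, and the likelihood-based theorem of Section II.C, which Taylor-expands the score functions of both models around $\bfbeta=0$, solves the truncated score equations in closed form, and takes the ratio of the two explicit approximate MLEs to get $c_1/(4c_2)=\sqrt{2\pi}/4=\sqrt{\pi/8}\approx 0.627$ (which the paper rounds to $0.625$). You instead match the fitted probability surfaces through the link approximation $\Lambda(u)\approx\Phi(\sqrt{\pi/8}\,u)$, invert $\Phi$ to equate linear predictors, and then patch the gap with a perturbation of the probit score evaluated at the rescaled logit MLE. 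Your route buys two things the paper's does not: it does not require the linear predictors $\vx_i^\top\bfbeta$ to be small (the paper's expansion around zero is only trustworthy near the center of the link), and it applies directly to the multivariate model with intercept, whereas the paper's likelihood derivation is explicitly univariate and intercept-free, with the multivariate case left to empirical verification. What the paper's route buys is transparency: closed-form approximate estimators whose ratio is manifestly data-free. One caution on your perturbation step: the probit score is a \emph{weighted} residual sum, $\sum_i (y_i-\Phi(\vx_i^\top\bfbeta))\,w_i(\bfbeta)\,\vx_i$ with $w_i=\phi/\{\Phi(1-\Phi)\}$, while the logit score is unweighted. The logit normal equations kill $\sum_i r_i\vx_i$, not $\sum_i r_i w_i\vx_i$, so the probit score at $\sqrt{\pi/8}\,\hat{\bfbeta}^{(\tt logit)}$ is small only up to a term governed by the variation of the weights across the design, not solely by the sup-norm link gap. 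This does not wreck the argument --- it is the same order of looseness as the paper's own ``$\approx$'' chains --- but it should be stated rather than absorbed silently into $\nu$.
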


\subsection{Likelihood-based verification of structural equivalence}
\label{subsection:likelihood}
In the proofs presented earlier, we focused on the parameters and never mentioned their estimates. We now provide a likelihood based verification of the structural equivalence of probit and logit. Without loss of generality, we shall focus on the univariate case where the underlying linear model  does not have the
intercept $\beta_0$, so that $\eta(\vx_i) = \bfbeta \vx_i$. With $\vx_i$ denoting the  predictor variable for the $i$th observation, we have the probability model $\Pr[Y_i = 1 | \vx_i] = \pi(\vx_i) = F(\eta(\vx_i)) = F(\bfbeta \vx_i)$. Let $\hat{\bfbeta}^{(\tt logit)}$ and $\hat{\bfbeta}^{(\tt probit)}$ denote the estimates of $\bfbeta$ for the logit and the probit link functions respectively. Our first verification of the equivalence of the above link functions consists of showing that
$\hat{\bfbeta}^{(\tt logit)}$ and $\hat{\bfbeta}^{(\tt probit)}$ are linearly related through $\hat{\bfbeta}^{(\tt probit)} =\tau + \theta \hat{\bfbeta}^{(\tt logit)} + \nu$, with a coefficient of determination very close to $1$ and a slope $\theta$ that remains fixed regardless of the task at hand.  We derive the approximate estimates of $\theta$ theoretically using Taylor series expansion, but we also confirm their values computationally by simulation.

\begin{theorem} Consider an i.i.d sample $(\vx_1,y_1),(\vx_2,y_2),\cdots,(\vx_n,y_n)$ where  $\vx_i\in \mathbb{R}$ is a real-valued predictor variable, and  $y_i \in \{0,1\}$ is the corresponding binary response. First consider fitting the probit model
$\Pr[Y_i = 1 | \vx_i] = \pi(\vx_i) = \Phi(\bfbeta \vx_i)$ to the data, and let $\hat{\bfbeta}^{(\tt probit)}$ denote the corresponding estimate of $\bfbeta$. Then consider fitting the logit model and $\Pr[Y_i = 1 | \vx_i] = \pi(\vx_i) = 1/(1+\exp(-\bfbeta \vx_i))$ to the data, and let $\hat{\bfbeta}^{(\tt logit)}$ denote the corresponding estimate of $\bfbeta$. Then,
$$
\hat{\bfbeta}^{(\tt probit)} \simeq 0.625 \hat{\bfbeta}^{(\tt logit)}.
$$
\end{theorem}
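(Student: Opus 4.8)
The plan is to work directly with the log-likelihoods and to exploit the scaling relationship already established in Lemma \eqref{lemma:goldenratio}, namely $\Lambda(u)\approx\Phi(\sqrt{\pi/8}\,u)$. Writing $c=\sqrt{\pi/8}$, I would first record the two log-likelihoods for the no-intercept univariate model,
$$
\ell_{\tt logit}(\beta)=\sum_{i=1}^n\Big[y_i\log\Lambda(\beta \vx_i)+(1-y_i)\log\big(1-\Lambda(\beta \vx_i)\big)\Big],
$$
together with the analogous $\ell_{\tt probit}(\gamma)$ obtained by replacing $\Lambda$ with $\Phi$. The key observation is that each log-likelihood depends on its parameter only through the fitted probabilities $F(\beta \vx_i)$, so that any approximation of one cdf by a rescaled copy of the other transfers directly into an approximation of one likelihood surface by a reparametrized copy of the other.

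Second, I would pin down the constant $c$ by a Taylor expansion at the origin, which is the precise sense in which Lemma \eqref{lemma:goldenratio} is exact to first order. Since $\Lambda(0)=\Phi(0)=\tfrac12$, $\Lambda'(0)=\tfrac14$, and $\Phi'(0)=\phi(0)=1/\sqrt{2\pi}$, matching the linear terms of $\Lambda(u)$ and $\Phi(cu)$ forces $c/\sqrt{2\pi}=\tfrac14$, that is $c=\sqrt{2\pi}/4=\sqrt{\pi/8}\approx0.6267$, in agreement with the claimed value $0.625$ (equivalently, logit coefficients are about $1/0.625=1.6$ times the probit ones). This is the same constant that appears in the earlier structural-equivalence theorem, so the present result is simply its likelihood-level counterpart.

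Third, I would substitute $\gamma=c\beta$ into the probit likelihood and apply the lemma term by term: since $\Phi(c\beta \vx_i)\approx\Lambda(\beta \vx_i)$ for every $i$, one obtains $\ell_{\tt probit}(c\beta)\approx\ell_{\tt logit}(\beta)$ as functions of $\beta$. Hence the map $\beta\mapsto\ell_{\tt probit}(c\beta)$ is approximately maximized at $\hat{\bfbeta}^{(\tt logit)}$; but that same map is maximized exactly at $\beta=\hat{\bfbeta}^{(\tt probit)}/c$. Equating the two maximizers yields $\hat{\bfbeta}^{(\tt probit)}/c\approx\hat{\bfbeta}^{(\tt logit)}$, i.e. $\hat{\bfbeta}^{(\tt probit)}\approx c\,\hat{\bfbeta}^{(\tt logit)}\approx0.625\,\hat{\bfbeta}^{(\tt logit)}$, as required. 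I would prefer this objective-level argument over manipulating the two score equations directly, because the probit score carries the awkward weight $\phi/[\Phi(1-\Phi)]$ that has no logit analogue, whereas the likelihood substitution sidesteps it entirely.

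The main obstacle is the passage from \emph{the two objective functions are approximately equal} to \emph{their maximizers are approximately equal}, because the cdf approximation in Lemma \eqref{lemma:goldenratio} is tightest near the center and deteriorates in the tails, precisely where a large $|\beta \vx_i|$ makes individual log-likelihood contributions most sensitive. To make the argmax step rigorous one would bound the uniform error $\sup_u|\Phi(cu)-\Lambda(u)|$, propagate it through the $\log$ terms, and invoke local strong concavity of $\ell_{\tt probit}$ near its optimum, so that a uniformly small perturbation of the surface displaces the maximizer by a correspondingly small amount. At the informal level adopted here it suffices to note that the design keeps most $\beta \vx_i$ in the central region where the approximation is excellent, which is exactly what the simulations and the Pima and Crabs examples confirm numerically, with the empirical ratio $\hat{\theta}$ consistently hovering around $0.6$.
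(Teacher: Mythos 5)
Your proposal is correct, but it takes a genuinely different route from the paper. The paper works at the level of the score equations: it Taylor-expands $\partial\log\pi(\vx_i)/\partial\bfbeta$ and $\partial\log(1-\pi(\vx_i))/\partial\bfbeta$ around $\bfbeta=0$ for each link, truncates to the linear term, and solves the two truncated score equations in closed form, obtaining $\hat{\bfbeta}^{(\tt logit)}\simeq 2\,[(2\sum_i\vx_iy_i-\sum_i\vx_i)/\sum_i\vx_i^2]$ and $\hat{\bfbeta}^{(\tt probit)}\simeq \frac{c_1}{2c_2}[(2\sum_i\vx_iy_i-\sum_i\vx_i)/\sum_i\vx_i^2]$ with $c_1=\sqrt{2/\pi}$ and $c_2=1/\pi$, so the ratio is $c_1/(4c_2)=\sqrt{2\pi}/4=\sqrt{\pi/8}\approx 0.6267$ (which the paper rounds to $0.625$ in the statement). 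You instead reparametrize the probit likelihood via $\gamma=c\beta$ with $c=\sqrt{\pi/8}$ fixed by matching $\Lambda'(0)=1/4$ against $c\,\phi(0)$, conclude $\ell_{\tt probit}(c\beta)\approx\ell_{\tt logit}(\beta)$, and compare maximizers; reassuringly, your constant is \emph{exactly} the paper's $c_1/(4c_2)$. What your approach buys is conceptual economy and a direct link to Lemma \ref{lemma:goldenratio} and the structural-equivalence theorem, together with an honest identification of the real gap (approximate equality of objectives does not automatically give approximate equality of argmaxes, and the cdf approximation degrades in the tails); what the paper's approach buys is explicit closed-form approximate estimators, so the ratio drops out by inspection rather than through an argmax-stability argument. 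Both arguments are informal at the same level --- the paper discards higher-order Taylor terms without bounding the remainder, which is the score-equation analogue of the uniform-approximation issue you flag --- so neither is more rigorous than the other, and your sketch of how to close the gap (uniform error bound plus local strong concavity) is the right one.
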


\begin{proof} Given an i.i.d sample $(\vx_1,y_1),(\vx_2,y_2),\cdots,(\vx_n,y_n)$ and the model $\Pr[Y_i = 1 | \vx_i] = \pi(\vx_i)$,  the loglikelihood for $\bfbeta$ is given by
\begin{eqnarray*}
\ell(\bfbeta) &=& \log L(\bfbeta) \\
&=& \sum_{i=1}^n\Big\{{y_i}\log\pi(\vx_i) + ({1-y_i})\log(1-\pi(\vx_i))\Big\}.
\end{eqnarray*}
Under the logit link function, we have $\pi(\vx_i) = 1/(1+e^{-\bfbeta \vx_i})$.
Now, using a Taylor series expansion around zero for the two most important parts of the loglikelihood function, we get
\begin{eqnarray*}
\frac{\partial {\log(\pi(\vx_i))}}{\partial \bfbeta} =
\frac{\vx_i}{2} - \frac{\vx_i^2}{4}\bfbeta +\frac{\vx_i^4}{48}\bfbeta^3-\frac{\vx_i^6}{480}\bfbeta^5,
\end{eqnarray*}
and
\begin{eqnarray*}
\frac{\partial {\log(1-\pi(\vx_i))}}{\partial \bfbeta}
= -\frac{\vx_i}{2} - \frac{\vx_i^2}{4}\bfbeta +\frac{\vx_i^4}{48}\bfbeta^3-\frac{\vx_i^6}{480}\bfbeta^5.
\end{eqnarray*}
The derivative of the approximate log-likelihood function for the logit model is then given by
\begin{eqnarray*}
\ell^\prime(\bfbeta)&=&\sum_{i=1}^n\Bigg\{y_i\left(\frac{\vx_i}{2} - \frac{\vx_i^2}{4}\bfbeta +\frac{\vx_i^4}{48}\bfbeta^3-\frac{\vx_i^6}{480}\bfbeta^5\right)\Bigg\}\\
&+&\sum_{i=1}^n\Bigg\{(1-y_i)\left(-\frac{\vx_i}{2} - \frac{\vx_i^2}{4}\bfbeta +\frac{\vx_i^4}{48}\bfbeta^3-\frac{\vx_i^6}{480}\bfbeta^5\right)\Bigg\},
\end{eqnarray*}
which, upon ignoring the higher degree terms in the expansion becomes
$$
\ell^\prime(\bfbeta) \simeq \sum_{i=1}^n\Bigg\{4y_i\vx_i-2\vx_i-\vx_i^2\bfbeta\Bigg\}.
$$
It is straightforward to see that solving $\ell^\prime(\bfbeta)=0$ for $\bfbeta$ yields
$$
{\hat{\bfbeta}^{(\tt logit)}} \simeq 2\left[\frac{2\displaystyle\sum_{i=1}^n{\vx_i y_i} - \displaystyle\sum_{i=1}^n{\vx_i}}{\displaystyle \sum_{i=1}^n{\vx_i^2}}\right].
$$
If we now consider the probit link function, we have $\pi(\vx_i) = \Phi(\bfbeta\vx_i) = \int_{-\infty}^{\bfbeta \vx_i}{\frac{1}{\sqrt{2\pi}}e^{-\frac{1}{2}z^2}}$. Using a derivation similar to the one performed earlier, and ignoring higher order terms, we get
\begin{eqnarray*}
\ell^\prime(\bfbeta) = \frac{\partial {\ell(\bfbeta)}}{\partial \bfbeta} &\simeq&
\sum_{i=1}^n\Bigg\{{y_i}\left(c_1\vx_i - 2 c_2\bfbeta\vx_i^2\right)\Bigg\} \\
&+& \sum_{i=1}^n\Bigg\{({1-y_i})
\left(-c_1\vx_i - 2 c_2\bfbeta\vx_i^2\right)\Bigg\}\\
&=& \sum_{i=1}^n\Bigg\{2c_1\vx_i y_i -c_1\vx_i-2 c_2\bfbeta\vx_i^2\Bigg\}
\end{eqnarray*}
where $c_1=0.797885$ and $c_2=0.31831$. This leads to
$$
\hat{\bfbeta}^{(\tt probit)} \simeq \frac{c_1}{2 c_2}\left[\frac{2\displaystyle\sum_{i=1}^n{\vx_i y_i} - \displaystyle\sum_{i=1}^n{\vx_i}}{\displaystyle \sum_{i=1}^n{\vx_i^2}}\right].
$$
It is then straightforward to see that
$$
\frac{\hat{\bfbeta}^{(\tt probit)}}{\hat{\bfbeta}^{(\tt logit)}} \simeq \frac{c_1}{4 c_2} = 0.625,
$$
or equivalently
$$
\hat{\bfbeta}^{(\tt probit)} \simeq 0.625 \hat{\bfbeta}^{(\tt logit)}.
$$
\end{proof}

It must be emphasized that the above likelihood-based theoretical verifications are dependent on Taylor series approximations of the likelihood and therefore
the factor of proportionality are bound to be inexact. It's re-assuring however to see that our
computational verification  does confirm the results found by theoretical derivation.

\section{Similarities and Differences beyond Logit and Probit}
\noindent Other aspects of our work reveal that the similarities proved and demonstrated above between
  the probit and the logit link functions extend predictively to the other link functions mentioned above.
As far as structural equivalence or the lack thereof is concerned,
Appendix A contains similar derivations for the relationship between cauchit and logit, and the relationship between compit and logit.
As far as, predictive equivalence is concerned, we now present a verification based on the computation of many replications of the test error.

\subsection{Computational Verification of Predictive Equivalence}
We now computationally compare the predictive merits of each of the four link functions considered so far. To this end,
we compare the estimated average test error yielded by the four link functions.
We do so by running $R = 10000$ replications of the split of the data set into training and test set, and at each iteration we compute
the corresponding test error for the classifier corresponding to each link functions.
For one iteration/replication for instance,
$\hat{R}_{\tt \tiny test}(\hat{f}^{(\tt probit)})$, $\hat{R}_{\tt \tiny test}(\hat{f}^{(\tt compit)})$,
$\hat{R}_{\tt \tiny test}(\hat{f}^{(\tt cauchit)})$ and $\hat{R}_{\tt \tiny test}(\hat{f}^{(\tt logit)})$ are the values of the test error generated by probit, compit, cauchit and logit respectively. After $R$ replications, we have $R$ random realizations of each of those four test errors. We then perform various statistical calculations on the $R$ replications, namely {\em median, mean, standard deviation, kurtosis, skewness, IQR etc...}, to assess the similarity and the differences among the link functions. We perform the similar $R$ replications for model comparison using both AIC and BIC.

{\tt Example 4:} {\sf Verification of Predictive Equivalence on Artificial Data:}
$\{(\vx_i,y_i), i=1,\cdots,n\}$ where $\vx_i \sim {\tt Normal}(0,2^2)$ and $y_i \in \{0,1\}$ are drawn for a cauchy binary regression model with
$\beta_0=1$ and $\beta_1=2$, namely $Y_i \sim {\tt Bernoulli}(\pi(\vx_i))$ where
$$
 \pi(\vx_i) =\Pr[Y_i = 1 | \vx_i] = \frac{1}{\pi}\left[\tan^{-1}\left(1 + 2\vx_i\right)+\frac{\pi}{2}\right].
$$
Table\eqref{tab:test:artif:1} shows some statistics on $R=10000$ replications of the test error. The above results suggest that the four link functions
are almost indistinguishable as the estimated statistics are almost all equally across the examples.
\begin{table}[!htbp]
\centering  
\begin{tabular}{lrrrr}
  \toprule
     & {\tt probit} & {\tt compit} & {\tt cauchit} & {\tt logit} \\  \hline
  {\tt median}   &  0.16  &   0.16 &  0.16   &  0.16  \\
  {\tt mean}     &  0.16  &  0.16  &  0.16   &  0.16  \\
  {\tt sd}       &  0.04  &  0.04  &  0.03   &  0.04  \\
  {\tt skewness} &  0.21  &  0.26  &  0.26   &  0.24  \\
  {\tt kurtosis} &  3.18  &  3.51  &  3.20   &  3.20  \\
  {\tt cv}       & 22.56  & 22.46  & 22.25   & 22.57  \\
  {\tt IQR}      &  0.05  &  0.04  &  0.05   &  0.05  \\
  {\tt min}      &  0.06  &  0.04  &  0.06   &  0.06  \\
  {\tt max}      &  0.30  &  0.32  &  0.31   &  0.30  \\
  \bottomrule
\end{tabular}
\caption{\it Statistics based on $R=10000$ replicates of the test error on the artificial data set described above.
It's clear that the values are indistinguishable across the four link functions.} 
\label{tab:test:artif:1} 
\end{table}

\noindent  {\tt Example 5:} {\sf Verification of Predictive Equivalence on the Pima Indian Diabetes Dataset:}
We once again consider the famous Pima Indian Diabetes
dataset. The Pima Indian Diabetes Dataset is arguably one the most used benchmark data sets in the statistics and pattern recognition community. As can be see in Table \eqref{tab:test:pima:1}, there is virtually no difference between the models. In other words, on the Pima Indian Diabetes data set, the four link functions are predictive equivalent.
\begin{table}[!htbp]
\centering  
\begin{tabular}{lrrrr}
  \toprule
      & {\tt probit} & {\tt compit} & {\tt cauchit} & {\tt logit} \\  \hline
  {\tt median}   &  0.25  &  0.24   &  0.25  &  0.25  \\
  {\tt mean}     &  0.25  &  0.25   &  0.26  &  0.25  \\
  {\tt sd}       &  0.04  &  0.04   &  0.05  &  0.04  \\
  {\tt skewness} &  0.06  &  0.07   &  0.06  &  0.06  \\
  {\tt kurtosis} &  2.92  &  2.95   &  2.95  &  2.92  \\
  {\tt cv}       & 17.84  & 18.33   & 17.62  & 17.85  \\
  {\tt IQR}      &  0.06  &  0.06   &  0.07  &  0.06  \\
  {\tt min}      &  0.09  &  0.07   &  0.10  &  0.09  \\
  {\tt max}      &  0.43  &  0.40   &  0.45  &  0.42  \\
  \bottomrule
\end{tabular}
\caption{\it Statistics based on $R=10000$ replicates of the test error on the Pima Indian Diabetes data set.
It's quite obvious that the values are indistinguishable across the four link functions.}
\label{tab:test:pima:1}
\end{table}

\noindent It's also noteworthy to point out that all the four models also yield similar goodness of fit measures when scored using AIC and BIC.
Indeed, Figure (\ref{fig:aic:bic:pima:1}) reveals that over the $R=10000$ replications of the split of the data into training and test set,
both the AIC and BIC are distributionally similar across all the four link functions.
\begin{figure}[!htbp]
\centering
\begin{tabular}{lr}
\epsfig{figure=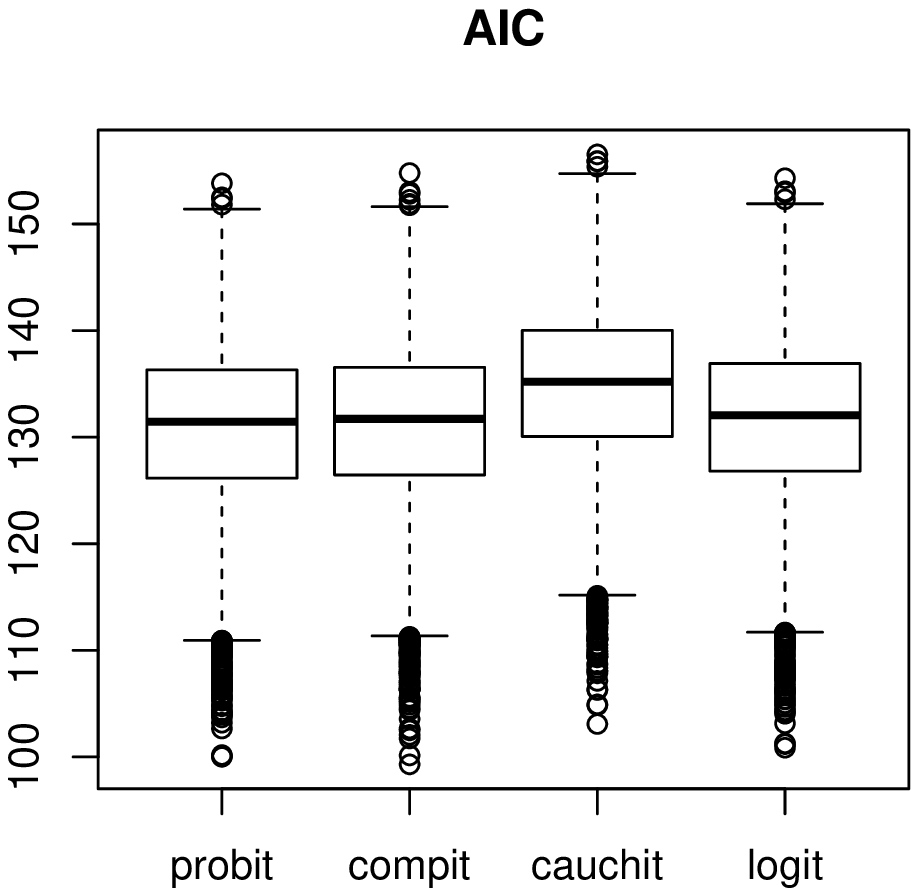, height=8cm, width=8cm}&
\epsfig{figure=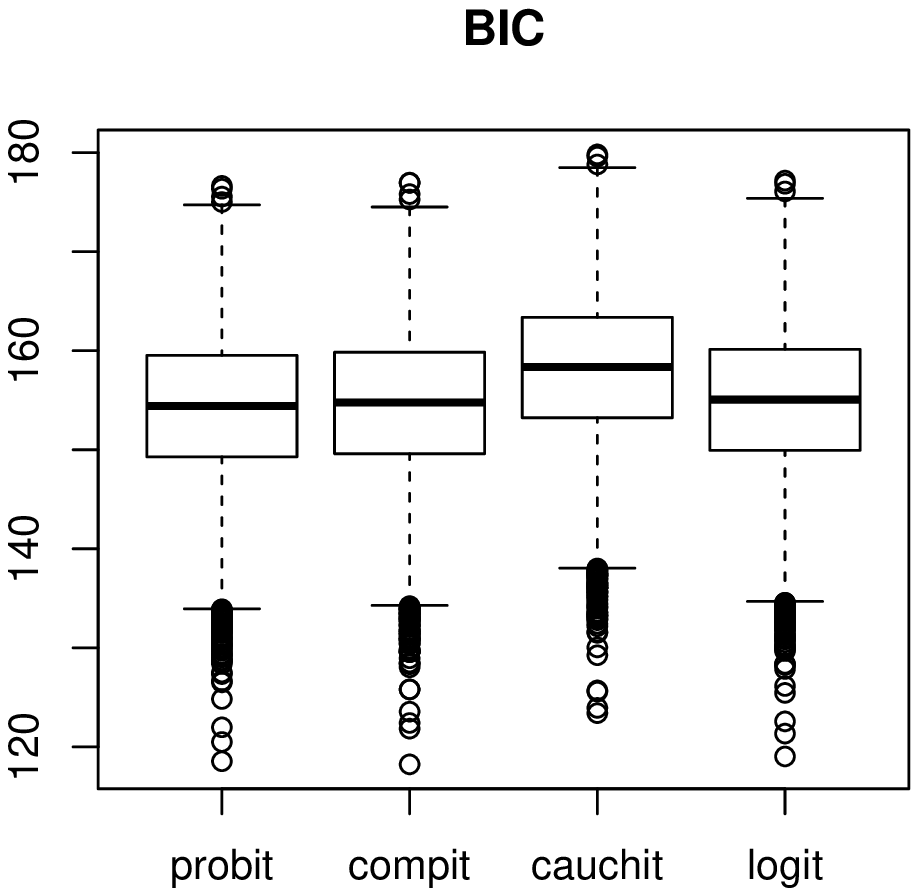, height=8cm, width=8cm}
\end{tabular}
\caption{{\it Comparative boxplots of both AIC and BIC across the four link functions based on $R=10000$ replications of model fittings.}}
\label{fig:aic:bic:pima:1}
\end{figure}
\noindent Despite the slight difference shown by the Cauchit model, it is fair to say that all the link functions are
equivalent in terms of goodness of fit. Once again, this is yet another evidence to support and somewhat reinforce/confirm \cite{Feller}'s
claim that all these link functions are equivalent in terms of goodness of fit, and that the over-glorification of the logit model
is at best misguided if not unfounded.

\subsection{Evidence of  Differences in High Dimensional Spaces}
{\sf Simulated evidence: } We generate $s=10000$ observations in the interval $[-15,15]$. For each
link function, we compute the sign of $F(\vx_i)-1/2$ for $i=1,\cdots, s$. We then generate a table containing
the percentage of times the signs differ.
\begin{table}[!htbp]
\centering  
\begin{tabular}{lrrrr}
  \toprule
      & {\tt probit} & {\tt compit} & {\tt cauchit} & {\tt logit} \\  \hline
  {\tt probit}   &  0.000  &  0.004   &  0.000  &  0.000  \\
  {\tt compit}   &  0.004  &  0.000   &  0.004  &  0.004  \\
  {\tt cauchit}  &  0.000  &  0.004   &  0.000  &  0.000  \\
  {\tt logit}    &  0.000  &  0.000   &  0.000  &  0.000  \\
  \bottomrule
\end{tabular}
\caption{\it All the pairs reveal a disagreement of $0\%$ except the pairs involving the compit.}
\label{tab:link:disagree}
\end{table}

\noindent {\sf Computational Demonstrations on the Email Spam Data: }
Unlike all the other data sets encountered thus far, the email spam data set is
a fairly high dimensional data set. It has a total of $p=57$ variables and
$n=4601$ observations.
\begin{table}[!h]
\centering  
\begin{tabular}{lrrrr}
  \toprule
       & {\tt probit} & {\tt compit} & {\tt cauchit} & {\tt logit} \\  \hline
  {\tt median}   &  0.08  &  0.13   &  0.07  &  0.07  \\
  {\tt mean}     &  0.10  &  0.13   &  0.07  &  0.08  \\
  {\tt sd}       &  0.03  &  0.03   &  0.04  &  0.01  \\
  {\tt skewness} &  1.75  &  0.88   &  9.16  &  4.86  \\
  {\tt kurtosis} &  9.29  &  8.56   &  103.95  & 40.58  \\
  {\tt cv}       & 34.41  & 20.61   & 51.15  & 14.32  \\
  {\tt IQR}      &  0.04  &  0.04   &  0.01  &  0.01  \\
  {\tt min}      &  0.06  &  0.07   &  0.05  &  0.06  \\
  {\tt max}      &  0.41  &  0.38   &  0.62  &  0.18 \\
  \bottomrule
\end{tabular}
\caption{Email Spam Data Set Results}
\label{tab:spam:1}
\end{table}

\noindent Clearly, the results depicted in Table \eqref{tab:spam:1} reveal some drastic differences
in performance among the four link functions on this rather high dimensional data.
The boxplots below reinforce these findings as they show that in terms of goodness of fit measured through
AIC and BIC, the compit model deviates substantially from the other models.
\begin{figure}[!htbp]
\centering
\begin{tabular}{lr}
\epsfig{figure=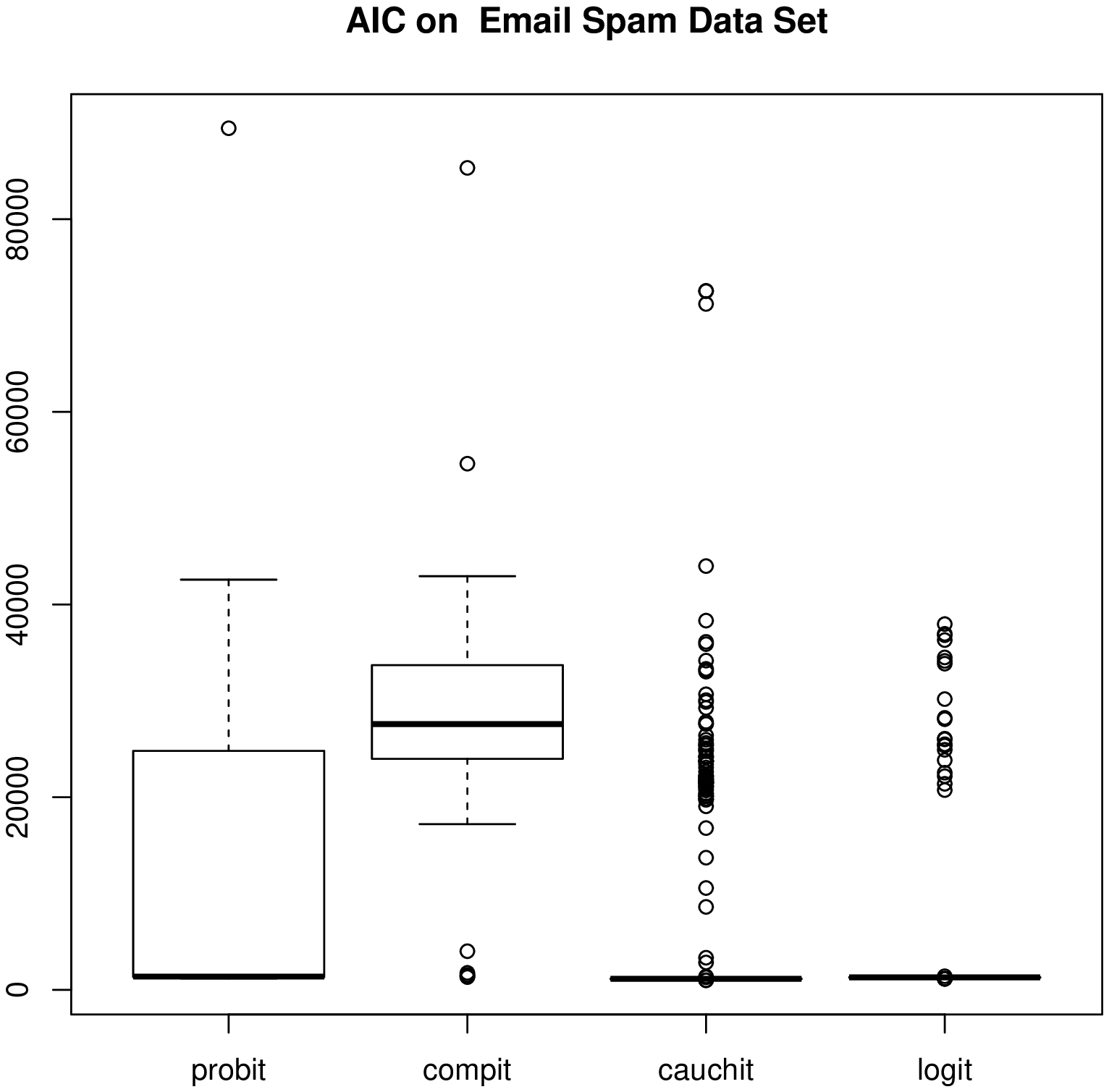, height=5.0cm, width=7cm}&
\epsfig{figure=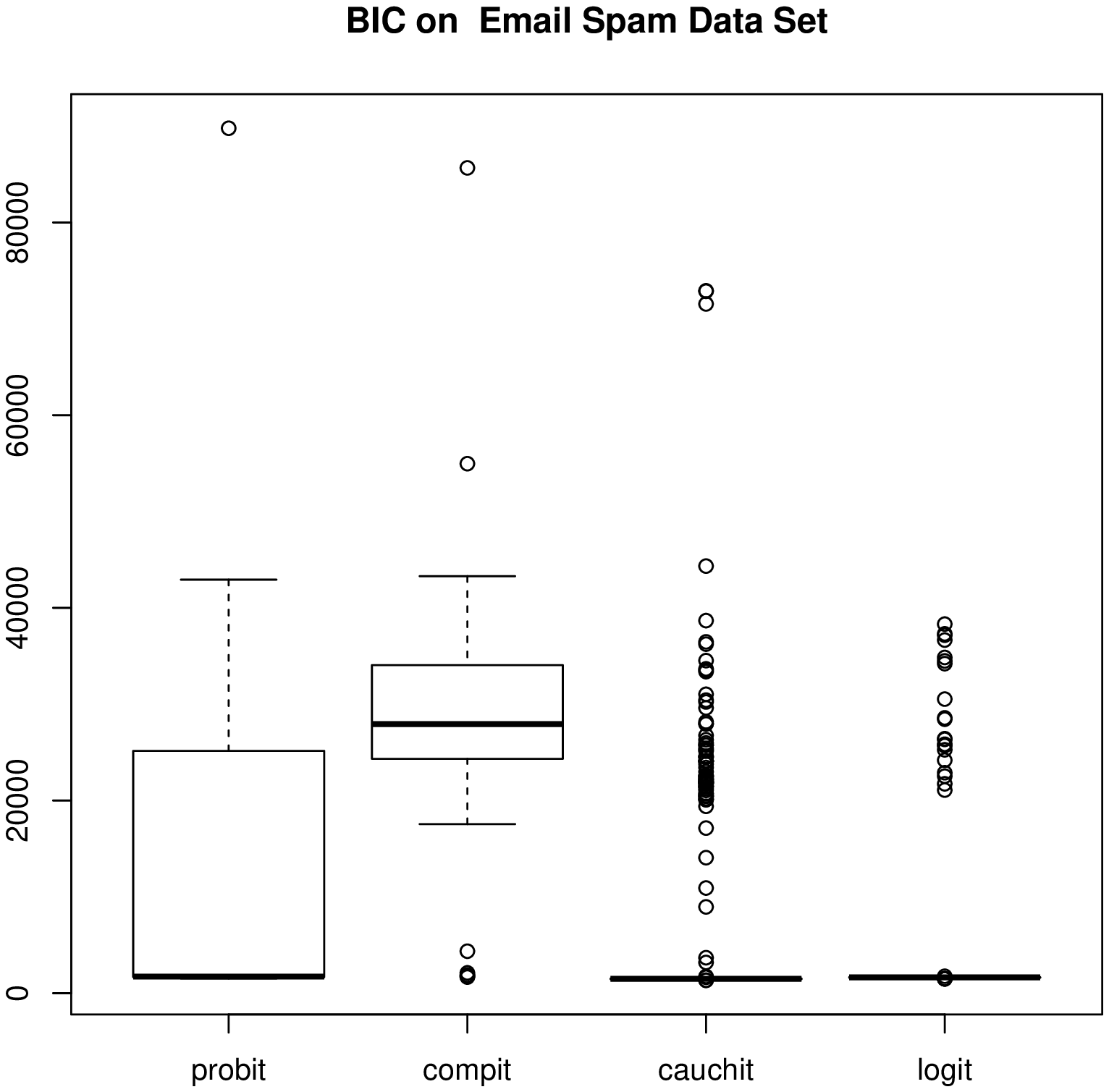, height=5.0cm, width=7cm}
\end{tabular}
\caption{{\it Comparative Boxplots assessing the goodness of fit of the four link functions using AIC and BIC over $R=10000$
replications of model fitting under each of the link functions.}}
\label{fig:aic-bic-2}
\end{figure}

\section{Conclusion and discussion}
Throughout this  paper, we have explored both conceptually/methodologically  and
computationally the similarities  among four of the most commonly used link
functions in binary regression. We have theoretically shed some light on some of the structural
reasons that explain the indistinguishability in performance in the univariate settings
among the four link functions considered. Although section 2 concentrated mainly on the equivalence of
the logit and probit, the Appendix provides a similar derivation for both the
cauchit and the complementary log log link functions. We have also demonstrated by computational
simulations that the four link functions are essentially equivalent both structurally and predictively
in the univariate setting and in  low dimensional spaces. Our last example showed computationally that the four link functions might differ quite substantially when the dimensional of the input space becomes extremely large. We notice specifically that the performance in high dimensional spaces tends to defend on the internal structure of the input: completely orthogonal designs tending to bode well with
all the perfectly symmetric link functions while the non orthogonal designs deliver best performances
under the complementary log log. Finally, the sparseness of the input space tends to dictate the
choice of the most appropriate link function, Cauchit tending to be the model of choice under
high level of sparseness. In our future work, we intend to provide as complete a theoretical characterization as possible in extremely high dimensional
spaces, namely providing the conditions under which each of the link function will yield the best fit for the data.

\bibliographystyle{chicago}
\bibliography{ernest-necla-1}

\section{Appendix A}
\label{appendix:1}
\begin{theorem}Consider an i.i.d sample $(\vx_1,y_1),(\vx_2,y_2),\cdots,(\vx_n,y_n)$ where  $\vx_i\in \mathbb{R}$ is a real-valued predictor variable, and  $y_i \in \{0,1\}$ is the corresponding binary response. First consider fitting the cauchit model
$\Pr[Y_i = 1 | \vx_i] = \pi(\vx_i) = \frac{1}{\pi}\left[\tan^{-1}(\bfbeta \vx_i)+\frac{\pi}{2}\right]$ to the data, and let $\hat{\bfbeta}^{(\tt cauchit)}$ denote the corresponding estimate of $\bfbeta$. Then consider fitting the logit model and $\Pr[Y_i = 1 | \vx_i] = \pi(\vx_i) = 1/(1+\exp(-\bfbeta \vx_i))$ to the data, and let $\hat{\bfbeta}^{(\tt logit)}$ denote the corresponding estimate of $\bfbeta$. Then,
$$
\hat{\bfbeta}^{(\tt cauchit)} \simeq \frac{\pi}{4} \hat{\bfbeta}^{(\tt logit)}.
$$
\end{theorem}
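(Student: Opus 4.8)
The plan is to reproduce, for the cauchit link, the same Taylor-expansion-of-the-score argument that was just used to relate probit to logit, comparing the result against the identical logit benchmark. I would begin from the generic loglikelihood $\ell(\bfbeta) = \sum_{i=1}^n\{y_i\log\pi(\vx_i) + (1-y_i)\log(1-\pi(\vx_i))\}$, and I would take as given (from the logit derivation carried out above) the approximate solution $\hat{\bfbeta}^{(\tt logit)} \simeq 2\big[(2\sum_i \vx_i y_i - \sum_i \vx_i)/\sum_i \vx_i^2\big]$. The entire task then collapses to producing the analogous closed form for $\hat{\bfbeta}^{(\tt cauchit)}$ and forming the ratio of the two.

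For the cauchit model I would write $\pi(\vx_i) = \tfrac{1}{2} + \tfrac{1}{\pi}\tan^{-1}(\bfbeta\vx_i)$, so that $1-\pi(\vx_i) = \tfrac{1}{2} - \tfrac{1}{\pi}\tan^{-1}(\bfbeta\vx_i)$, and expand the two logarithmic-derivative pieces of the score about $\bfbeta = 0$. Using $\partial_{\bfbeta}\log\pi(\vx_i) = \frac{1}{\pi(\vx_i)}\cdot\frac{1}{\pi}\cdot\frac{\vx_i}{1+(\bfbeta\vx_i)^2}$ (and the sign-flipped analogue for $1-\pi$), a short chain-rule computation gives the constant term $\pm\frac{2}{\pi}\vx_i$ and, crucially, the same linear coefficient $-\frac{4}{\pi^2}\bfbeta\vx_i^2$ in both pieces. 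Dropping higher-order terms, the score therefore collapses to
$$\ell^\prime(\bfbeta) \simeq \frac{2}{\pi}\sum_{i=1}^n \vx_i(2y_i-1) - \frac{4}{\pi^2}\bfbeta\sum_{i=1}^n \vx_i^2.$$

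Solving $\ell^\prime(\bfbeta)=0$ then yields $\hat{\bfbeta}^{(\tt cauchit)} \simeq \frac{\pi}{2}\big[(2\sum_i \vx_i y_i - \sum_i \vx_i)/\sum_i \vx_i^2\big]$, which carries exactly the same data-dependent factor $(2\sum_i \vx_i y_i - \sum_i \vx_i)/\sum_i \vx_i^2$ as the logit estimate. Dividing, the sample-dependent part cancels and leaves $\hat{\bfbeta}^{(\tt cauchit)}/\hat{\bfbeta}^{(\tt logit)} \simeq (\pi/2)/2 = \pi/4$, i.e.\ $\hat{\bfbeta}^{(\tt cauchit)} \simeq \frac{\pi}{4}\hat{\bfbeta}^{(\tt logit)}$, as claimed.

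The main obstacle I anticipate is the second-order bookkeeping in the cauchit expansion. Unlike the logit case, here two distinct sources feed the coefficient of $\bfbeta$: the factor $1/(1+(\bfbeta\vx_i)^2)$ coming from differentiating $\tan^{-1}$, and the $\tan^{-1}$ term sitting inside the logarithm. The chain rule must be applied carefully to verify that these combine to give precisely $-4/\pi^2$ in both the $\log\pi$ and the $\log(1-\pi)$ pieces; it is exactly this coincidence of signs that lets the two linear terms add rather than partially cancel, and it is what ultimately fixes the proportionality constant at $\pi/4$. Once the two local coefficients $2/\pi$ and $4/\pi^2$ are confirmed, the rest is the same algebra as in the probit proof.
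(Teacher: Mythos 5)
Your proposal is correct and follows essentially the same route as the paper's Appendix A proof: Taylor-expand the cauchit score about $\bfbeta=0$, obtain the coefficients $\pm\frac{2}{\pi}\vx_i$ and $-\frac{4}{\pi^2}\bfbeta\vx_i^2$, solve the linearized score equation to get $\hat{\bfbeta}^{(\tt cauchit)} \simeq \frac{\pi}{2}\bigl[(2\sum_i\vx_i y_i - \sum_i\vx_i)/\sum_i\vx_i^2\bigr]$, and compare with the logit benchmark. You even make explicit the final division by the logit estimate, which the paper leaves implicit.
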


\begin{proof}
Given an i.i.d sample $(\vx_1,y_1),(\vx_2,y_2),\cdots,(\vx_n,y_n)$ and the model $\Pr[Y_i = 1 | \vx_i] = \pi(\vx_i)$,  the loglikelihood for $\bfbeta$ is given by
\begin{eqnarray}
\ell(\bfbeta) = \log L(\bfbeta) = \sum_{i=1}^n\Big\{{y_i}\log\pi(\vx_i) + ({1-y_i})\log(1-\pi(\vx_i))\Big\}.
\end{eqnarray}
For the Cauchit for instance, $\pi(\vx_i) = \frac{1}{2} + \frac{1}{\pi}\tan^{-1}(\bfbeta \vx_i)$.
We use the Taylor series expansion around zero for both $\log(\pi(\vx_i))$ and $\log(1-\pi(\vx_i))$.
$$
\log \pi(\vx_i) = -\log 2 + \frac{2\bfbeta\vx_i}{\pi} - \frac{2\bfbeta^2\vx_i^2}{\pi^2}
- \frac{2(\pi^2-4)\bfbeta^3\vx_i^3}{3\pi^3} + O(\vx_i^4)
$$
and
$$
\log(1-\pi(\vx_i)) = -\log 2 - \frac{2\bfbeta\vx_i}{\pi} - \frac{2\bfbeta^2\vx_i^2}{\pi^2}
+ \frac{2(\pi^2-4)\bfbeta^3\vx_i^3}{3\pi^3} + O(\vx_i^4)
$$

A first order approximation of the derivative of the log-likelihood with respect to $\bfbeta$ is
\begin{eqnarray*}
\ell^\prime(\bfbeta) = \frac{\partial {\ell(\bfbeta)}}{\partial \bfbeta} &=&
\sum_{i=1}^n\Bigg\{{y_i}\left(\frac{2\vx_i}{\pi} - \frac{4\bfbeta\vx_i^2}{\pi^2}\right) + ({1-y_i})
\left(-\frac{2\vx_i}{\pi} - \frac{4\bfbeta\vx_i^2}{\pi^2}\right)\Bigg\}\\
&=& \sum_{i=1}^n\Bigg\{\frac{4}{\pi}\vx_i y_i -\frac{2}{\pi}\vx_i-\frac{4}{\pi^2}\bfbeta\vx_i^2\Bigg\}
\end{eqnarray*}
Solving $\ell^\prime(\bfbeta) =0$ yields
$$
\hat{\bfbeta} = \frac{\frac{4}{\pi}\sum_{i=1}^n{\vx_i y_i} - \frac{2}{\pi}\sum_{i=1}^n{\vx_i}}{\frac{4}{\pi^2}\sum_{i=1}^n{\vx_i^2}}
$$
which simplifies to
$$
\hat{\bfbeta}^{(\tt cauchit)} = \frac{\pi}{2}\left[\frac{2\displaystyle\sum_{i=1}^n{\vx_i y_i} - \displaystyle\sum_{i=1}^n{\vx_i}}{\displaystyle \sum_{i=1}^n{\vx_i^2}}\right]
$$
\end{proof}

\end{document}